\newtheorem{prop}{Proposition}
\newcommand{\minus}{\scalebox{0.5}[1.0]{$-$}}
\title{Robust Out-of-Distribution Detection on
\\Deep Probabilistic Generative Models}
\author{
Jaemoo Choi\thanks{Equal Contribution}\\
\And 
Changyeon Yoon\footnotemark[1]\\
\And 
Jeongwoo Bae\\
\And 
Myungjoo Kang
\AND
\textnormal{Seoul National University}\\
}
\begin{document}

\maketitle

\begin{abstract}
Out-of-distribution (OOD) detection is an important task in machine learning systems for ensuring their reliability and safety. Deep probabilistic generative models facilitate OOD detection by estimating the likelihood of a data sample. However, such models frequently assign a suspiciously high likelihood to a specific outlier. Several recent works have addressed this issue by training a neural network with auxiliary outliers, which are generated by perturbing the input data. In this paper, we discover that these approaches fail for certain OOD datasets. Thus, we suggest a new detection metric that operates without outlier exposure. We observe that our metric is robust to diverse variations of an image compared to the previous outlier-exposing methods. Furthermore, our proposed score requires neither auxiliary models nor additional training. Instead, this paper utilizes the likelihood ratio statistic in a new perspective to extract genuine properties from the given single deep probabilistic generative model. We also apply a novel numerical approximation to enable fast implementation. Finally, we demonstrate comprehensive experiments on various probabilistic generative models and show that our method achieves state-of-the-art performance.
\end{abstract}

\section{Introduction}
Recognizing whether data are anomalous or significantly different from training data is a crucial task in the machine learning system for ensuring their reliability and safety.
Such anomalous data are known as out-of-distribution (OOD) data.
Unfortunately, deep neural network classifiers frequently classify OOD data erroneously into one of the categories in high confidence, which may cause fatal consequences when applied to real-world data \citep{nguyen2015deep}. For instance, misclassifying OOD data in autonomous driving or medical diagnosis fields are hazardous to human life.
Several approaches deal with this problem based on deep neural network classifiers but are limited only to labeled data.
In contrast to classifiers, deep generative models can not only function without supervision but are also widely applied in various domains such as images, graphs, and sequences \citep{kipf2016variational, malhotra2016lstm}.

Deep probabilistic generative models, especially those offering explicit computation of the marginal likelihood, such as auto-regressive models \citep{salimans2017pixelcnn++}, variational auto-encoders (VAEs) \citep{kingma2013auto}, and normalizing flows \citep{dinh2016density, kingma2018glow} are considered to be successful in OOD detection since they approximate the likelihoods of the input data and successfully generate realistic in-distribution samples.
However, the validity of generative models for OOD detection is challenged due to several detrimental phenomena \citep{nalisnick2018deep, choi2018waic}.
Numerous studies \citep{nalisnick2018deep, choi2018waic, kirichenko2020normalizing, pope2020adversarial, xiao2020likelihood} report difficulties in OOD detection on VAE \citep{kingma2013auto} and normalizing flows \citep{kingma2018glow}.
There have been various attempts to develop a universal score for deep probabilistic generative models; however, no unified method has been established, to the best of our knowledge.

Several studies \citep{choi2019novelty, ren2019likelihood, ran2020bigeminal, lee2017training, hendrycks2018deep} suggest discriminating the OOD samples by exposing outliers during training.
In our work, we call this methodology `outlier exposure' and classify those outliers into two main types: explicit outlier data and synthetic data generated by perturbation of input images.
This concept is a generalization of the outlier exposure mentioned in \citep{hendrycks2018deep}.
The assumption of the outlier exposure is that if a model gains the ability to distinguish in-distribution data from the OOD samples that are exposed during training, the model may well perform for arbitrary OOD samples.
However, since such outliers do not represent the entire OOD data in an image manifold, we show in Section \ref{section 3.1} that such an assumption is questionable.
Based on these observations, we propose a new evaluation metric for deep probabilistic generative models without using any prior knowledge of OOD data.

Our proposed metric, named Robust Out-of-distribution detection ScorE (ROSE), estimates the extent to which the trained model is expected to be improved under the assumption that a test sample is fed as an input.
Unlike most previous works \citep{xiao2020likelihood, choi2019novelty, ren2019likelihood}, our score is simply obtained by only backpropagating once and neither requires any outlier to be exposed nor additional training during the test phase.
Furthermore, ROSE is applicable to a single deep generative model and has a short inference time, thus cost-effective and practical.
We conduct ROSE on various datasets \citep{mnist, fmnist, cifar10, svhn, celeba, lsun, omniglot, kmnist} for two representative probabilistic generative models, VAE and generative flow (GLOW) \citep{kingma2018glow}, which are known to be egregious for the OOD detection task.
In Section \ref{section 5.1}, we present comprehensive experiments to show that our work achieves state-of-the-art performance compared to the other benchmark methods \citep{xiao2020likelihood, ren2019likelihood, serra2019input}.
In Section \ref{section 5.2}, we further demonstrate the robustness of our proposed metric in various ways.

Our contributions are summarized as the followings:
\begin{itemize}
    \item We show that OOD detectors relying on the outlier exposure with synthetic data are unstable.
    \item We propose a new evaluation metric for OOD detection, named ROSE, which is widely applicable for various probabilistic generative models such as VAEs and normalizing flows.
    \item We evaluate the existing OOD methods on benchmark image datasets and demonstrate that our method achieves state-of-the-art (SOTA) performance even without exposing outliers.
\end{itemize}

\section{Related Works} \label{section 2}
Several studies utilize prior knowledge of the OOD samples in the training procedure to learn information drawn from an additional OOD dataset \citep{lee2017training, hendrycks2018deep, sehwag2021ssd, ran2020bigeminal}.
Exposing realistic and diverse anomalies are known to enhance the overall OOD detection performance \citep{hendrycks2018deep}.
Numerous researches verify its effectiveness in various deep learning architectures such as deep classifiers \citep{lee2017training}, self-supervised approaches \citep{sehwag2021ssd}, and deep generative models \citep{ran2020bigeminal}.
Nonetheless, such methods can only be performed under the assumption that an additional outlier dataset exists, limiting the discussion to the restricted environment.

Recent approaches detect OOD data using samples generated through the perturbation of input images  \citep{devries2018learning, lee2018simple, ren2019likelihood, choi2019novelty, ran2020detecting}.
For example, \citep{ren2019likelihood} suggest a likelihood ratio test (LRatio) that evaluates the difference between the estimated likelihoods of two generative models, with one capturing the overall data statistic, and the other the background statistic by randomly perturbed input data.
The background statistic is obtained by feeding a contaminated input, which is made by replacing the pixel value with a uniformly sampled value.
\citep{choi2019novelty} construct an OOD detector based on random network distillation (RND) \citep{burda2018exploration} that distinguishes between the representation of in-distribution data and its blurred version.
Specifically, one RND network $g_0$ encodes the input data, and the other RND network $g_1$ encodes blurred input.
The target network $T$ learns to obtain encoded representations.
Then, in the evaluation phase, the further $T(x)$ and $g_0(x)$ are apart, the closer sample $x$ is to OOD data.
Various blurring methods have been applied in \citep{choi2019novelty}: Gaussian blurring (GB), discrete cosine transform blurring (DCT), and singular value decomposition blurring (SVD).
We observe that no robust perturbation method that successfully operates on all datasets exists, as shown in Table \ref{table 1} in Section \ref{section 3.2}, necessitating a new OOD detection method without any additional perturbed data generation.

Concurrently, several works propose OOD detectors that do not assume the nature of the OOD samples \citep{choi2018waic, serra2019input, nalisnick2019detecting, xiao2020likelihood, havtorn2021hierarchical}.
\citep{choi2018waic} measure uncertainty by combining the ensemble method with the Watanabe Akaike information criterion, which is known to be relatively stable on the generative adversarial networks (GANs) \citep{goodfellow2014generative}.
Input complexity (IC) \citep{serra2019input} suggests a lossless compressor to get the background likelihood of a sample.
Precisely, IC measures the difference between the likelihoods induced from the generative model and background likelihood from the lossless compressor.
Although these methods are statistically well-clarified, they obtain substandard results on VAE, which implies that such approaches are not robust to various models \citep{choi2018waic, xiao2020likelihood}.
\citep{nalisnick2019detecting} propose an explicit statistical test for their evaluation metric.
This method uses traditional robust statistics, which require more than one test sample. Thus, such an approach is limited to group anomaly detection.

Likelihood regret (LR) \citep{xiao2020likelihood}, the most relevant approach to our work,
compute the log ratio between the likelihood approximated by the original VAE and its parameter optimized for each input.
Despite the comparatively superior performance, their method is inapplicable to other generative models because of its high reliance on the bottleneck structure of VAEs.
Moreover, optimizing a deep neural network with millions of parameters for every sample is time-consuming.
Thus, a new time-efficient OOD score applicable to a broader range of deep probabilistic generative models needs to be designed.


\section{Background and Problem Formulation} \label{section 3}

\begin{figure}[t]
\begin{subfigure}[]{0.245\textwidth}
  \centering
  \includegraphics[width=1\textwidth]{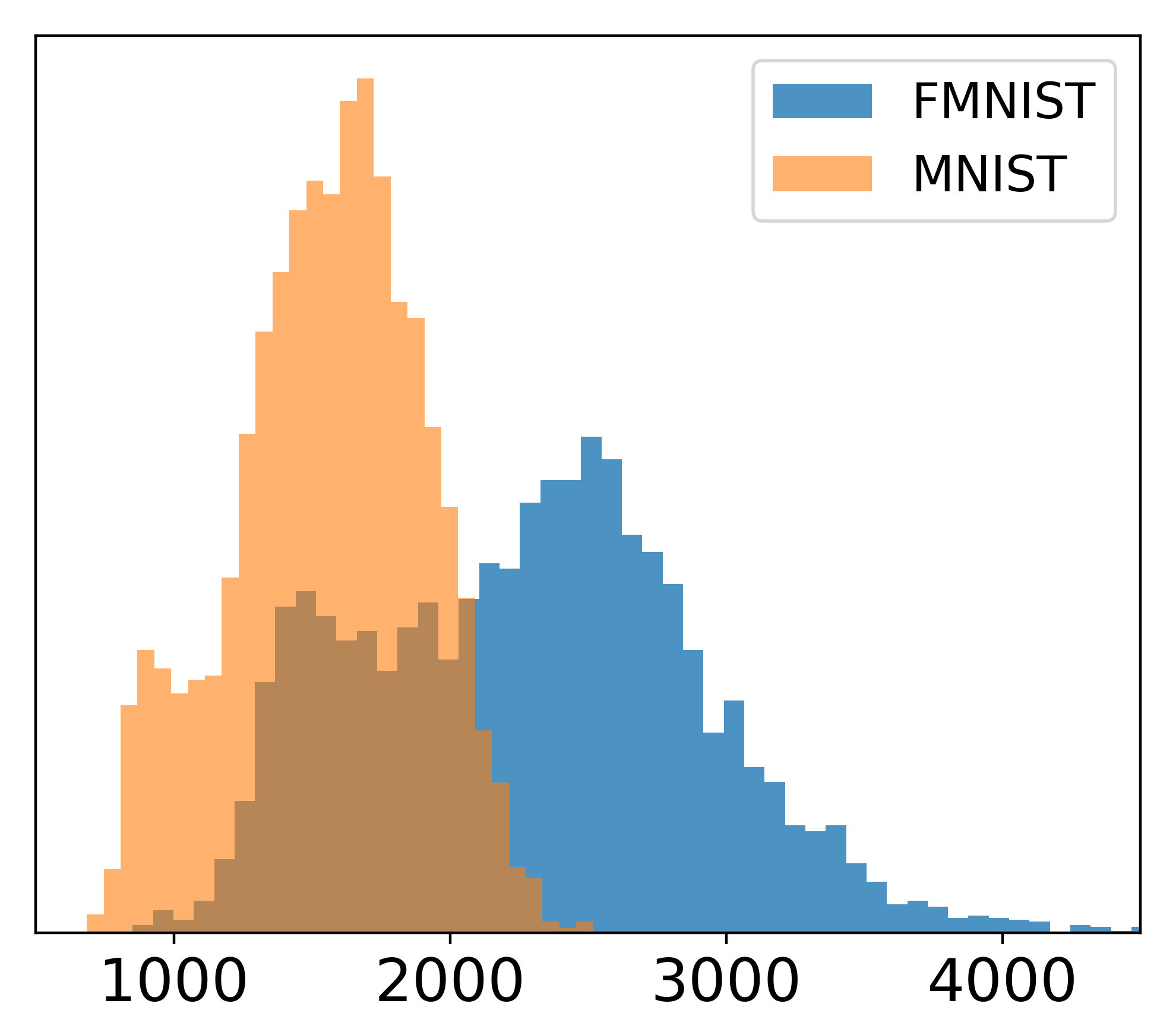}
  \caption{}
\end{subfigure}
\begin{subfigure}[]{0.245\textwidth}
  \centering
  \includegraphics[width=1\textwidth]{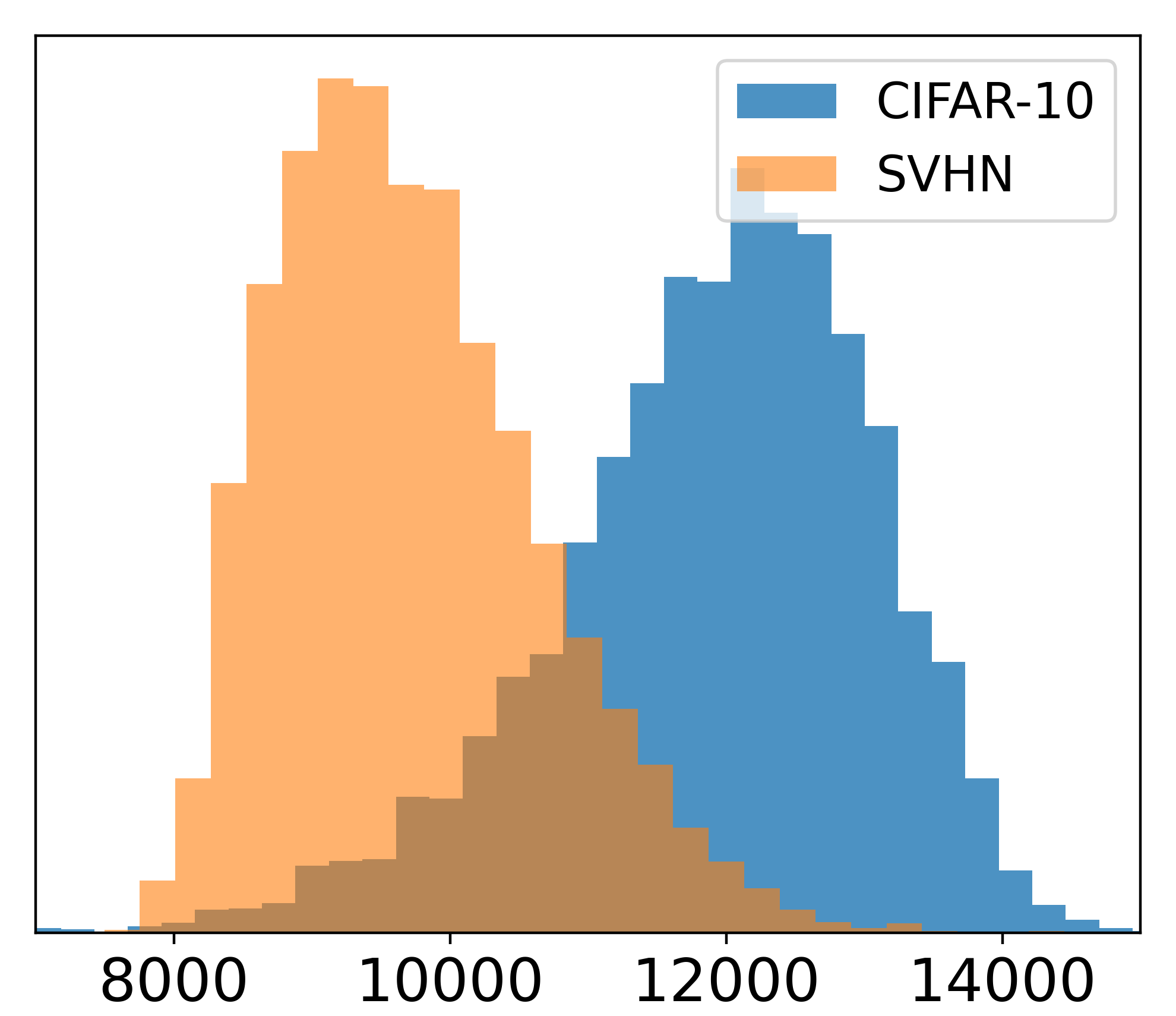}
  \caption{}
\end{subfigure}
\begin{subfigure}[]{0.245\textwidth}
  \centering
  \includegraphics[width=1\textwidth]{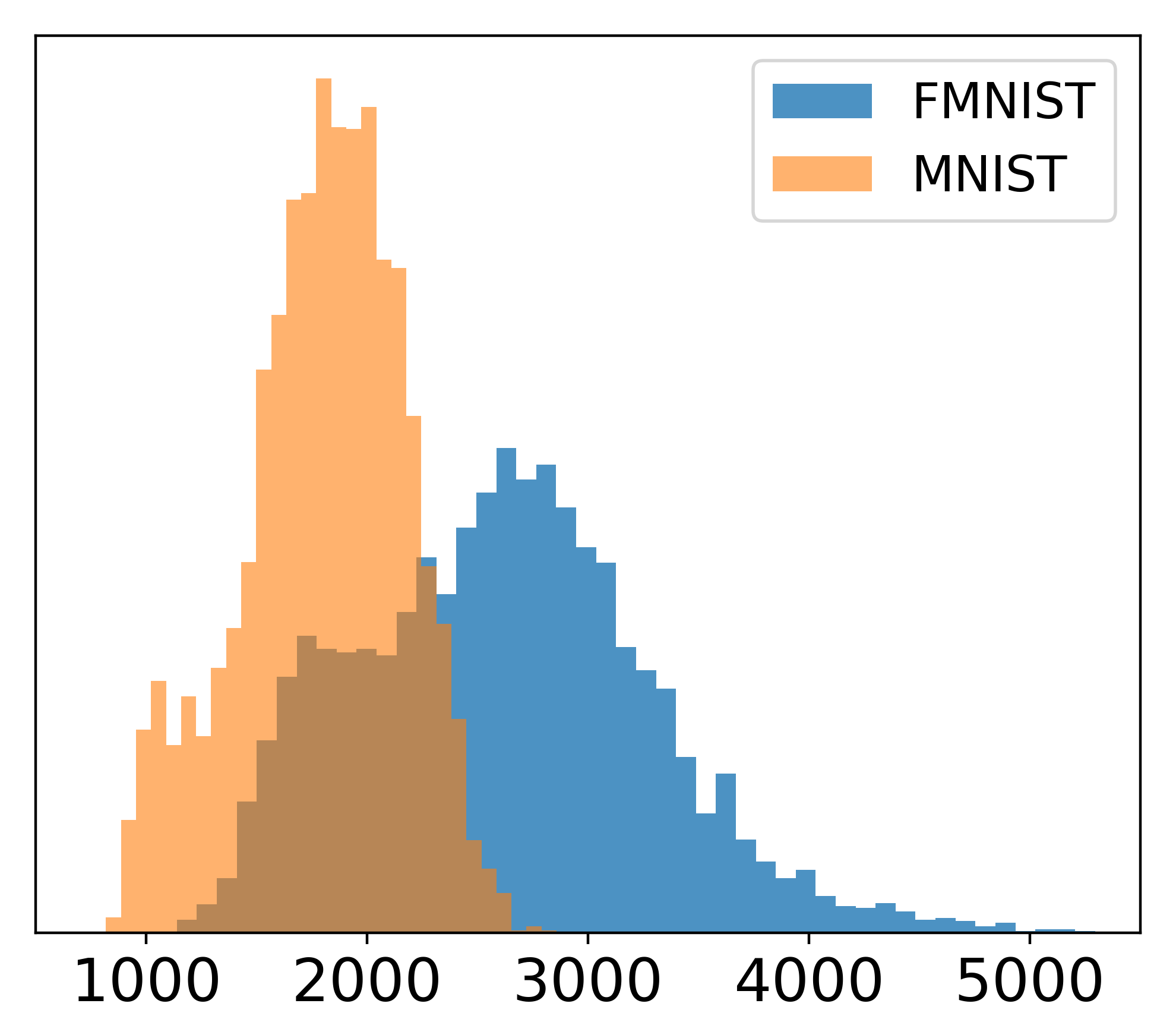}
  \caption{}
\end{subfigure}
\begin{subfigure}[]{0.245\textwidth}
  \centering
  \includegraphics[width=1\textwidth]{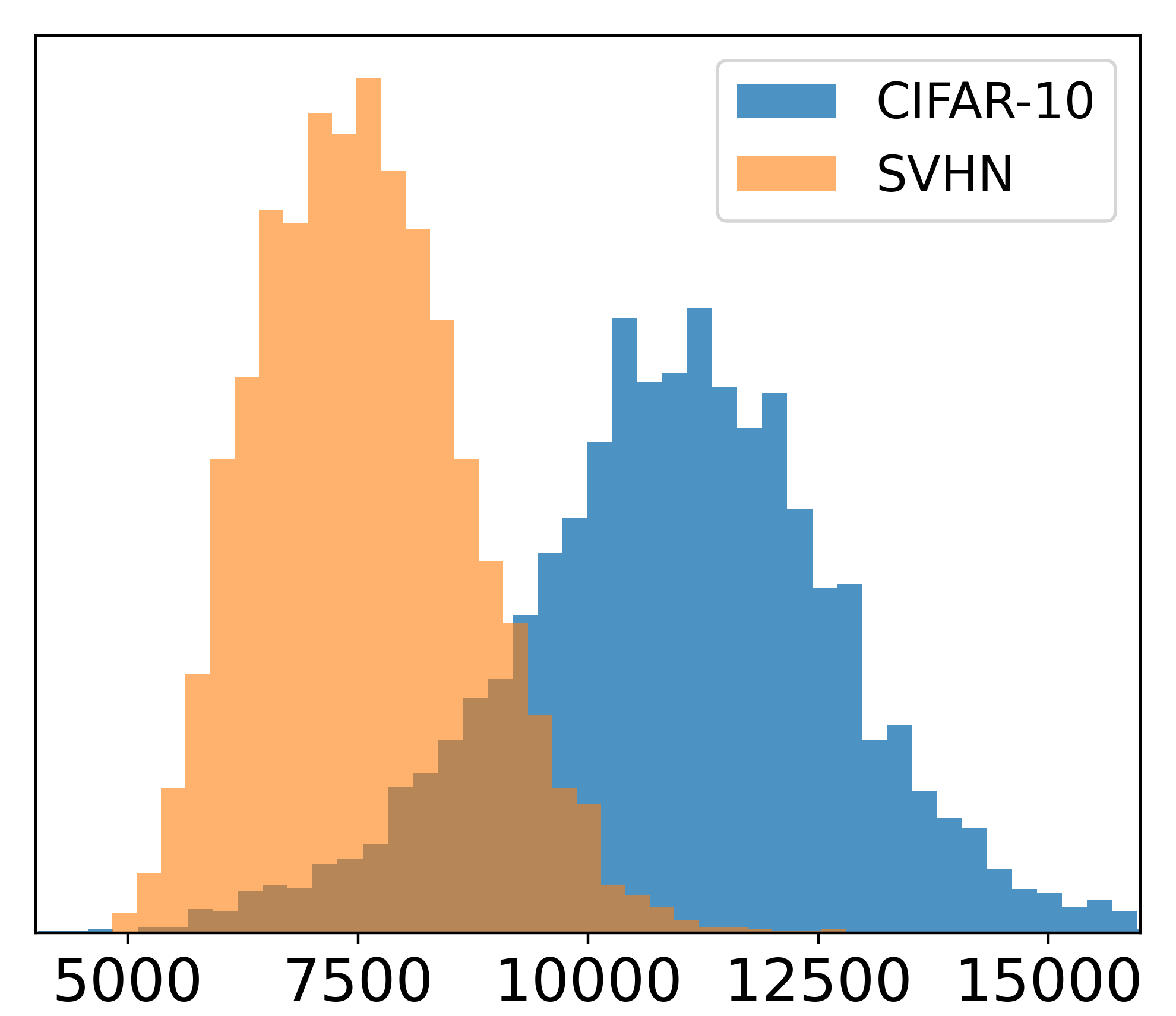}
  \caption{}
\end{subfigure}
\caption{Histograms of the negative log likelihoods (NLLs) obtained from:
(a) Fashion-MNIST vs. MNIST with VAE trained on Fashion-MNIST,
(b) CIFAR-10 vs. SVHN with VAE trained on CIFAR-10,
(c) Fashion-MNIST vs. MNIST with GLOW trained on Fashion-MNIST,
(d) CIFAR-10 vs. SVHN with GLOW trained on CIFAR-10.
Note that NLLs of OOD data are lower than NLLs of in-distribution data.
}
\label{figure 1}
\end{figure}

\subsection{Difficulties in OOD detection with Probabilistic Generative Models}
\label{section 3.1}
Deep probabilistic generative models learn to maximize the likelihood of the input data and estimate the data distribution. 
Thus, they have been previously expected to be accurate detectors that only assign high likelihoods for in-distribution data.
In this light, anomaly detection with probabilistic generative models, especially for VAEs, has been widely applied to handle real-world tasks in various fields \citep{an2015variational, xu2018unsupervised, kipf2016variational, malhotra2016lstm, zhang2019deep}.
However, most deep probabilistic generative models, including auto-regressive models, VAEs, and normalizing flows, assign abnormally high likelihoods to certain OOD samples \citep{nalisnick2018deep}. 
To verify this, we trained our VAE and GLOW on Fashion-MNIST (FMNIST) and CIFAR-10.
The implementation settings are described in Appendix \ref{Appendix B : Our Experimental Settings}.
Figure \ref{figure 1} (a, b) shows the negative log-likelihoods (NLLs) obtained from VAEs trained on (a) Fashion-MNIST and (b) CIFAR-10.
Similarly, Figure \ref{figure 1} (c, d) shows NLLs obtained from GLOWs trained on (c) Fashion-MNIST and (d) CIFAR-10.
Histogram of NLLs on Fashion-MNIST and MNIST data samples are presented in Figure \ref{figure 1} (a, c), and histogram on CIFAR-10 and SVHN are shown in Figure \ref{figure 1} (b, d).
All the histograms show that OOD datasets such as MNIST and SVHN tend to have lower NLL than the in-distribution datasets.
This is a common phenomenon even for other deep probabilistic generative models, including real-valued non-volume preserving network (real-NVP) \citep{dinh2016density, kirichenko2020normalizing} or PixelCNN++ \citep{nalisnick2018deep, nalisnick2018deep}.
Moreover, shown in Appendix \ref{Appendix F : Images}, VAE and GLOW trained on Fashion-MNIST and CIFAR-10 reconstruct MNIST and SVHN samples in high quality.
These abnormal phenomena make OOD detection through likelihood-based generative models notoriously challenging.

\subsection{Why should we detect OOD samples without Outlier Exposure} \label{section 3.2}
The majority of recent works suggest a detector that distinguishes between input data and their perturbation.
These approaches conjecture that such separation may be generalized in arbitrary abnormal samples.
However, the entire image manifold contains uncountably many OOD samples with different intrinsic properties.
Hence, although such an outlier-exposing technique performs better on specific data with similar properties to the generated outliers, we expect it may work poorly on others.

To confirm this, we conduct an experiment that evaluates the proposed methods in LRatio \citep{ren2019likelihood} and RND \citep{choi2019novelty}, the representative outlier-exposing approaches.
We analyze the robustness of those methods with VAE trained on CIFAR-10 in four perturbation methods proposed in \citep{choi2019novelty} and \citep{ren2019likelihood}; Uniformly random noise (Random), Gaussian blurring (GB), discrete cosine transformation (DCT) blurring, and singular value decomposition (SVD) blurring.
These four types of perturbation mainly focus on blurring or noising, which does not change the overall brightness of an image.
Accordingly, we investigate changes in the performance while adjusting the brightness of OOD samples.
Table \ref{table 1} presents the result of the experiment, considering CIFAR-10 as in-distribution data and the CelebA dataset as OOD data.
Each row indicates the area under the receiver operating characteristic curves (AUROCs) for each blurring, and each column denotes AUROCs for different changes in brightness of the CelebA dataset.
Here, 1$\times$ denotes the default brightness of each sample, and we control it from 0.2$\times$ to 1.8$\times$.
The standard deviation of AUROC shows the robustness of methods in the variation of brightness, while the mean of AUROC shows the overall performance.
Hence, Table \ref{table 1} illustrates that LRatio and RND are relatively unstable to our proposed metric ROSE, scored the mean 0.883 and the standard deviation 0.074 of AUROC. 
Thus, we can experimentally confirm that such outlier exposure methods work well only in certain situations and may not in unexpected OODs.
Moreover, we further identify that these approaches require delicate hyper-parameter tuning to obtain an adequate degree of perturbation (See Appendix \ref{Appendix C : Additional Results}).
In conclusion, all of the above observations show why we should not use outlier exposure when detecting OOD samples.

\begin{table}[t]
    \setlength{\tabcolsep}{2.5pt}
    \centering
    {\fontsize{10}{13}\selectfont  
    \aboverulesep=0ex 
    \belowrulesep=0ex 
    \begin{tabular}{llccccccccc|cc}
        \toprule
        \multicolumn{2}{c}{Trained on CIFAR-10} & \multicolumn{9}{c}{Brightness of CelebA} & \multirow{2}{*}{Mean} & \multirow{2}{*}{Std} \\
        \cmidrule(lr){3-11}
        Metrics & Perturbation& 0.2$\times$ & 0.4$\times$ & 0.6$\times$ & 0.8$\times$ & 1$\times$ & 1.2$\times$ & 1.4$\times$ & 1.6$\times$ & \multicolumn{1}{c}{1.8$\times$} & &\\
        \midrule
        \multirow{4}{*}{LRatio}
            & SVD & 0.575 & 0.557 & 0.440 & 0.388 & 0.324 & 0.564 & 0.630 & 0.593 & 0.524 & 0.511& 0.097\\
            & DCT & 0.822 & 0.746 & 0.562 & 0.461 & 0.373 & 0.463 & 0.459 & 0.416 & 0.371 &0.519&0.153\\
            & GB & 0.941 & 0.777 & 0.657 & 0.577 & 0.458 & 0.087 & 0.019 & 0.007 & 0.004 &0.392&0.348\\
            & Random & 0.010 & 0.085 & 0.205 & 0.269 & 0.333 & 0.564 & 0.721 & 0.811 & 0.870 &0.430&0.302\\
        \midrule
        
        \multirow{4}{*}{RND}
            & SVD & 0.183 & 0.075 & 0.120 & 0.358 & 0.566 & 0.733 & 0.825 & 0.871 & 0.893 &0.514&0.316\\
            & DCT & 0.516 & 0.064 & 0.196 & 0.335 & 0.548 & 0.737 & 0.836 & 0.880 & 0.896 &0.556&0.289\\
            & GB & 0.053 & 0.073 & 0.187 & 0.335 & 0.554 & 0.748 & 0.849 & 0.897 & 0.917 &0.513&0.337\\
            & Random & 0.010 & 0.037 & 0.140 & 0.303 & 0.577 & 0.794 & 0.892 & 0.933 & 0.950 &0.515&0.374\\
        \midrule
        ROSE & & 0.954 & 0.890 & 0.818 & 0.774 & 0.778 & 0.871 & 0.928 & 0.960 & 0.976 &\textbf{0.883}&\textbf{0.074}\\
        \bottomrule
    \end{tabular}
    }
    \vspace{2mm}
    \caption{AUROCs of LRatio \citep{ren2019likelihood}, RND \citep{choi2019novelty} and ROSE(ours) on CelebA with various brightness. All models are trained on CIFAR-10. Mean stands for the mean of AUROCs in a row, and Std indicates the standard deviation of AUROCs.}
    \label{table 1}
\end{table}

\begin{figure}[t]
    \centering
    \includegraphics[width=0.7\textwidth]{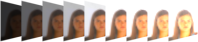}
\caption{Examples of CelebA images with various brightness.}
\label{figure 2}
\end{figure}

\section{Proposed Method} \label{section 4}
We first hypothesize that our trained model parameter is optimal and then approximate the marginal likelihood ratio statistic of each layer by second-order approximation (Section \ref{section 4.1}).
Next, we present a novel approximation method, called EKFAC \citep{george2018fast}, which allows faster computation of the inverse matrix by reducing the large-scale matrix to tensor product (Section \ref{section 4.2}).
Finally, we suggest our metric, ROSE, the aggregation of the layer-wise scores (Section \ref{section 4.3}).

\subsection{Assessing the Impact on the Marginal Likelihood}
\label{section 4.1}
Let $\theta_0^l$ be a set of parameters of $l$-th layer or module that maximizes the marginal likelihood of given in-distribution data $X=\{x_i\}_{i=1}^{N}$ and let $\theta_1^l$ be its counterpart that maximizes marginal likelihood of $X\cup \{\Tilde{x}\}$ where $\Tilde{x}$ is a given test sample.
For simplicity, in this section, we denote $\theta_i^l$ as $\theta_i$.
Our objective is to quantify how much the model parameters $\theta_{0}$ should be improved when the test sample $\Tilde{x}$ is added to the in-distribution data $X$.
Our high-level intuition is that if $\Tilde{x}$ is an OOD sample, a more significant improvement is demanded, which enables OOD detection.
We induce our improvement score starting from the marginal likelihood ratio statistic defined as the following :
\begin{equation} 
\label{likelihood ratio}
    \mathcal{S}_X(\Tilde{x}) = \log{p(\theta_1|X, \Tilde{x})} - \log{p(\theta_0|X, \Tilde{x})}
\end{equation}
The biggest problem with \eqref{likelihood ratio} is that we must optimize $\theta$ for every test sample, which is tremendously time-consuming. Hence, we approximate \eqref{likelihood ratio} with second-order analysis to obtain a tractable form. 
\begin{prop}
\label{prop 1}
    The score $\mathcal{S}_X(\Tilde{x})$ is approximated as
    \begin{equation}
        \mathcal{S}_X(\Tilde{x})
        = \nabla_\theta \log{p(\Tilde{x}|\theta_0)}^T \Delta\theta
        +\frac{1}{2}\Delta\theta^T \big(\nabla_\theta^2 \log{p(\Tilde{x}|\theta_0)}
        - N \mathcal{F}_{X}(\theta_0)\big)\Delta\theta +O(\lVert \Delta\theta\rVert^3),
    \end{equation}
    where $\Delta\theta = \theta_1-\theta_0$, $N=|X|$, and $\mathcal{F}_X(\theta_0)=\mathbb{E}_{x\in X}\big[\nabla_\theta \log{p(x|\theta_0)} \nabla_\theta \log{p(x|\theta_0)}^{T}\big]$.
\end{prop}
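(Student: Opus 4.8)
The plan is to reduce the posterior log-ratio to a difference of data log-likelihoods and then Taylor-expand it around the in-distribution optimum $\theta_0$, exploiting two facts: the first-order optimality of $\theta_0$ on $X$, and the information-matrix equality that relates the log-likelihood Hessian to the Fisher matrix $\mathcal{F}_X(\theta_0)$.

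First I would assume the data are conditionally independent and adopt a flat prior on $\theta$ (or absorb the prior into an additive constant), so that by Bayes' rule $\log p(\theta \mid X, \Tilde{x})$ equals $\sum_{i=1}^N \log p(x_i \mid \theta) + \log p(\Tilde{x} \mid \theta)$ up to a constant independent of $\theta$. Writing $\ell_X(\theta) = \sum_{i=1}^N \log p(x_i \mid \theta)$ and $g(\theta) = \log p(\Tilde{x}\mid\theta)$, this constant cancels in the difference \eqref{likelihood ratio}, giving $\mathcal{S}_X(\Tilde{x}) = \bigl[\ell_X(\theta_1) - \ell_X(\theta_0)\bigr] + \bigl[g(\theta_1) - g(\theta_0)\bigr]$, so the two brackets can be handled separately.

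For the test-sample term I would take a second-order Taylor expansion of $g$ about $\theta_0$, namely $g(\theta_1)-g(\theta_0) = \nabla_\theta \log p(\Tilde{x}\mid\theta_0)^T \Delta\theta + \tfrac12 \Delta\theta^T \nabla_\theta^2 \log p(\Tilde{x}\mid\theta_0)\,\Delta\theta + O(\lVert\Delta\theta\rVert^3)$, which already supplies the linear term and the $\nabla_\theta^2\log p(\Tilde{x}\mid\theta_0)$ contribution to the quadratic form. For the in-distribution term I would expand $\ell_X$ in the same way; here the decisive simplification is that $\theta_0$ maximizes $\ell_X$, so the stationarity condition $\nabla_\theta \ell_X(\theta_0)=0$ annihilates the linear term, leaving $\ell_X(\theta_1)-\ell_X(\theta_0)=\tfrac12 \Delta\theta^T \nabla_\theta^2\ell_X(\theta_0)\,\Delta\theta + O(\lVert\Delta\theta\rVert^3)$.

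The main obstacle is converting the empirical Hessian $\nabla_\theta^2 \ell_X(\theta_0) = \sum_i \nabla_\theta^2 \log p(x_i\mid\theta_0)$ into the stated $-N\,\mathcal{F}_X(\theta_0)$. This is where the only non-trivial approximation enters: I would invoke the information-matrix equality $\mathbb{E}_x[\nabla_\theta^2\log p(x\mid\theta_0)] = -\mathbb{E}_x[\nabla_\theta\log p(x\mid\theta_0)\,\nabla_\theta\log p(x\mid\theta_0)^T]$, which holds exactly at a well-specified optimum under the data-generating distribution and is used here in its empirical form with the outer-product (empirical Fisher) estimator $\mathcal{F}_X(\theta_0)$. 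Averaging over the $N$ in-distribution points and multiplying back by $N$ then yields $\nabla_\theta^2\ell_X(\theta_0)\approx -N\,\mathcal{F}_X(\theta_0)$. I would flag that this replacement, rather than the cubic Taylor remainder, is the true source of approximation and that its accuracy rests on $\theta_0$ being close to the population maximizer; substituting it into the quadratic form and combining with the test-sample expansion collects the two quadratic contributions into $\tfrac12\Delta\theta^T\bigl(\nabla_\theta^2\log p(\Tilde{x}\mid\theta_0) - N\,\mathcal{F}_X(\theta_0)\bigr)\Delta\theta$, giving exactly the claimed formula.
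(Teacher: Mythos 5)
Your proposal is correct and follows essentially the same route as the paper's proof: decompose the posterior log-ratio via Bayes' rule under a flat prior, Taylor-expand to second order, use stationarity of $\theta_0$ to kill the linear in-distribution term, and replace the empirical Hessian $\nabla_\theta^2 \log p(\theta_0 \mid X)$ by $-N\,\mathcal{F}_X(\theta_0)$. The only difference is one of exposition: the paper compresses the Hessian-to-Fisher substitution into the phrase ``Laplace approximation,'' whereas you correctly make explicit that this information-matrix/empirical-Fisher replacement, not the cubic remainder, is the substantive approximation.
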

\begin{proof}
    Direct application of the Laplace approximation. For proof, see Appendix \ref{Appendix E : proofs}.
\end{proof}

Proposition \ref{prop 1} shows that the optimal direction of $\Delta\theta^*$ using second order approximation is
\begin{equation}
    \Delta\theta^*
    = \big(N \mathcal{F}_X (\theta_0)-\nabla_\theta^2 \log{p(\Tilde{x}|\theta_0)}\big)^{\minus1}
    \nabla_\theta \log{p(\Tilde{x}|\theta_0)}.
\end{equation}
Since $N$ is a subset of training set which is sufficiently large, we approximate $N \mathcal{F}_X (\theta_0)-\nabla^2_\theta \log{p(\Tilde{x}|\theta_0)} \approx N \mathcal{F}_X (\theta_0)$.
Then, we obtain
\begin{equation}
    \mathcal{S}_X(\Tilde{x})
    \propto \frac{1}{N}\nabla_\theta \log{p(\Tilde{x}|\theta_0)}^T \mathcal{F}_X(\theta_0)^{\minus1} \nabla_\theta \log{p(\Tilde{x}|\theta_0)},
\end{equation}
by approximating $\theta_{1} \approx \theta_{0} + \Delta\theta^{*}$.
Therefore, we obtain:
\begin{equation} \label{ROSE}
    \mathcal{S}_X(\Tilde{x}) = s(\Tilde{x};\theta_0)^T \mathcal{F}_X(\theta_0)^{\minus1} s(\Tilde{x};\theta_0),
\end{equation}
up to scaling where $s(\Tilde{x};\theta_0) = \nabla_\theta \log{p(\Tilde{x}|\theta_0)}$.
Our approximation decouples the computation of the in-distribution set $X$ and test sample $\Tilde{x}$. Hence, once we compute the inverse of the Fisher matrix $\mathcal{F}_X(\theta_0)$ from set $X$, we directly obtain $\mathcal{S}_X$ for each test sample by backpropagating only once.
In addition, several statistical approaches \citep{nalisnick2019detecting}, including the likelihood-ratio test, require a group of samples, leading to group anomaly detection; however, our work uses only a single test sample to obtain the detection score.

\subsection{Approximation of  the Fisher Information Matrix} \label{section 4.2}
Despite the presence of \eqref{ROSE}, computing the exact inverse of the Fisher information matrix $\mathcal{F}(\theta) \in \mathbb{R}^{k \times k}$ remains challenging since the number of parameters $k$ in a typical deep generative model exceeds millions.
To address this issue, we employ two branches of studies; Simple diagonal approximation \citep{kirkpatrick2017overcoming} and more sophisticated factored approximation such as Kronecker-factored approximate curvature (KFAC) \citep{grosse2016kronecker} or eigenvalue-corrected Kronecker factorization (EKFAC) \citep{george2018fast}.
Both of the approaches extremely reduce memory consumption and computational cost.

First of all, the diagonal preconditioning is one of the simplest ways to compute an inverse of a large-scale matrix.
Through this method, $\mathcal{F}_X(\theta_0)$ can be directly approximated by:
\begin{equation} 
    \mathcal{F}_X(\theta_0) \approx \mathbb{E}_{X}\big[\mathrm{diag}\,\big(s(x;\theta_0) s(x;\theta_0)^{T}\big)\big],
\end{equation}
where $\mathrm{diag}(\cdot)$ is the diagonal part of a matrix.
Such preconditioning enables us to estimate variances along the initial parameter basis, which may be quite different from the eigenbasis.

For better approximation, we consider EKFAC \citep{george2018fast}, which accelerates computation by reducing matrix into tensor product, to estimate an orthogonal matrix $U$ and a diagonal matrix $\Sigma$ which satisfy the following:
\begin{equation}
    \mathcal{F}_X(\theta_0) = \mathbb{E}_{X}\big[s(x;\theta_0)s(x;\theta_0)^T\big] = U\,\Sigma\,U^T.
\end{equation}
To simplify our discussion, assume that $\theta_0$ is a weight matrix of a fully-connected layer which receives an input $h$ and returns a pre-activation $a=\theta_0^T h$.
By letting $\delta = \partial{l}/\partial{a}$ where $l$ is the loss function, we obtain
\begin{equation}
    \mathbb{E}_{X}\big[s(x;\theta_0)s(x;\theta_0)^T\big] = \mathbb{E}_{X}\big[(h\delta)(h\delta)^T\big] \approx A \otimes B,
\end{equation}
where $A = \mathbb{E}_X\big[hh^T\big]$, $B = \mathbb{E}_X\big[\delta\delta^T\big]$, and $\otimes$ denotes the Kronecker product.
Here, we approximate $U \approx U_A\otimes U_B$ where $U_A$ and $U_B$ are orthogonal matrices obtained from eigen-decomposition of $A$ and $B$, respectively.
Meanwhile, an elementary identity $(A\otimes B) \mathrm{vec}(C) = \mathrm{vec}(B^{T}CA)$ deduce the following:
\footnote{For details, see \citep{george2018fast}.}
\begin{equation}
    \Sigma \approx \mathbb{E}_X\big[\big((U_A\otimes U_B)\, s(x;\theta_0)\big)^2\big] = \mathbb{E}_X\big[\mathrm{vec}\big(U_B^T\, s(x;\theta_0)\, U_A\big)^2\big].
\end{equation}
Now we have $U_A$, $U_B$ and $\Sigma$, the compositional components of $\mathcal{F}_X(\theta_0)$ in hand. Finally, \eqref{ROSE} is calculated by using elementary properties of tensor product.
For multiple layers, we approximate it independently, which results in layer-wise block matrix. For the precise algorithm, see Appendix \ref{Appendix A : Additional details}.

\subsection{Aggregation of Layer-wise Scores} \label{section 4.3}
Let $\mathcal{S}_X^l$ be the score defined on \eqref{ROSE}, with the parameters of the $l$-th layer. For arbitrary given test sample $\Tilde{x}$, let $s^l = \mathcal{S}^l_X(\Tilde{x})$ and $s = (s^1, s^2, ..., s^L)$ where $L$ is the number of layers of the model.
Although the aggregated score can be obtained by simply adding, this should be prohibited, since the scales of each $s^l$ are different.
Then, there remains two choices; Selecting adequate layers for OOD detection or aggregating scores by normalizing each component of $s$.
The former case is discussed in Appendix \ref{Appendix D : Parameter Selection}.
For the latter, we simply normalize $s^l$ by the following means and variances:
\begin{equation}
    \mu^l = \mathbb{E}_{ X}\big[\mathcal{S}_X(x;\theta^l)\big],\quad
    \sigma^l = \sigma_{X}\big[\mathcal{S}_X(x;\theta^l)\big].
\end{equation}
Now, let $\hat{s}^l = (s^l-\mu^l)/\sigma^l$ for all $l \in \{ 1, 2, ..., L\}$ and $\hat{s} = (\hat{s}^1, \hat{s}^2, ..., \hat{s}^L)$.
Note that if every component of $\hat{s}$ is sufficiently small, then it is considered as an in-distribution sample.
Therefore, inspired by the shallow neural network architecture \citep{NIPS2012_c399862d}, we define our score as
\begin{equation} \label{rose_final}
    \mathrm{ROSE}(\Tilde{x}) =\big\lVert \mathrm{ReLU}(\hat{s} + \beta) \big\rVert_p,
\end{equation}
where $p\geq 1$, $\beta$ is an $L$-vector hyper-parameter and $\mathrm{ReLU}$ is a rectified linear units \citep{NIPS2012_c399862d}.
For our experiments, we simply take $\beta=0$ and $p=\infty$.
Again, our detailed algorithm is presented in Appendix \ref{Appendix A : Additional details}.

\section{Experimental Results} \label{section 5}
We conduct experiments on benchmark image datasets to investigate how ROSE performs in the OOD detection task.
For the experiments, we train two types of probabilistic generative models, standard VAE and GLOW, on in-distribution datasets (Fashion-MNIST and CIFAR-10).
Most of our work centers on VAE since no other fast and stable metric has been discovered yet, to the best of our knowledge.
We additionally show that our score discriminates OOD samples in high quality in GLOW.
We emphasize that our score does not use any prior knowledge of OOD samples and is sufficiently fast.
Our implementation on VAE and GLOW follows the existing literature (\citep{xiao2020likelihood}, \citep{nalisnick2018deep}, \citep{kingma2018glow}).
For details of our experimental settings, see Appendix \ref{Appendix B : Our Experimental Settings}.

Figure \ref{figure 3} shows the receiver operating characteristic (ROC) curves of likelihood and ROSE on (a) Fashion-MNIST vs. MNIST and (b) CIFAR-10 vs. SVHN.
For fair evaluation, the same VAE models trained on (a) Fashion-MNIST and (b) CIFAR-10, respectively, were used.
As a result, the AUROC of likelihood-based detection records below 0.2 on both situations (a) Fashion-MNIST vs. MNIST and (b) CIFAR-10 vs. SVHN.
We emphasize that this abnormal phenomenon is not limited to (a) and (b) but shows a problematic performance for most datasets.
Except for the noise, which is perfectly distinguishable with in-distribution samples, the AUROC on the other datasets also achieve less than 0.6 when CIFAR-10 is in-distribution, as shown in Table \ref{table 2}.
In contrast, our score ROSE achieves 0.995 and 0.907 AUROC for (a) Fashion-MNIST vs. MNIST and (b) CIFAR-10 vs. SVHN, respectively.

\begin{figure}[t]
\begin{subfigure}[]{0.49\textwidth}
  \centering
  \includegraphics[width=1\textwidth]{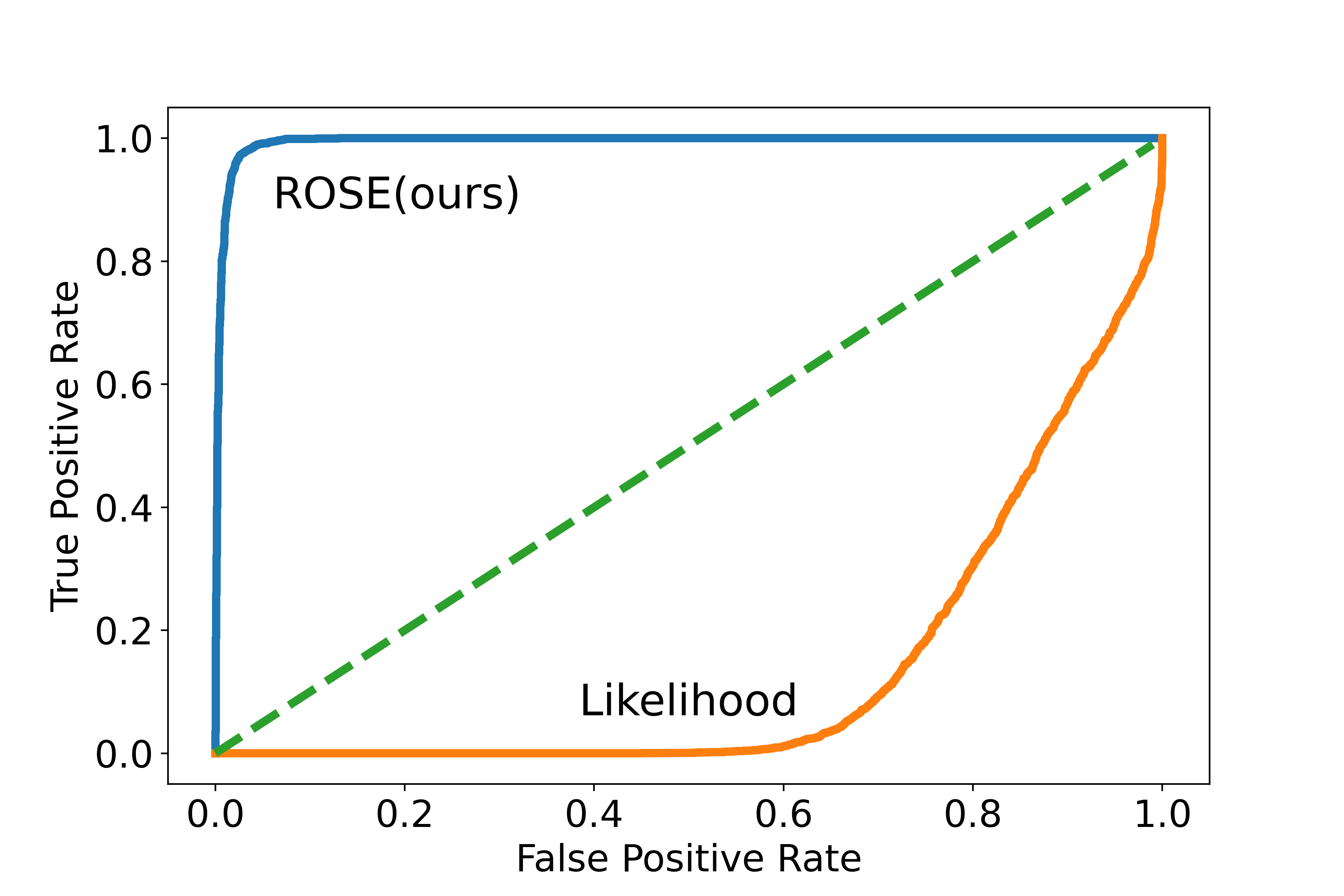}
  \caption{}
\end{subfigure}
\begin{subfigure}[]{0.49\textwidth}
  \centering
  \includegraphics[width=1\textwidth]{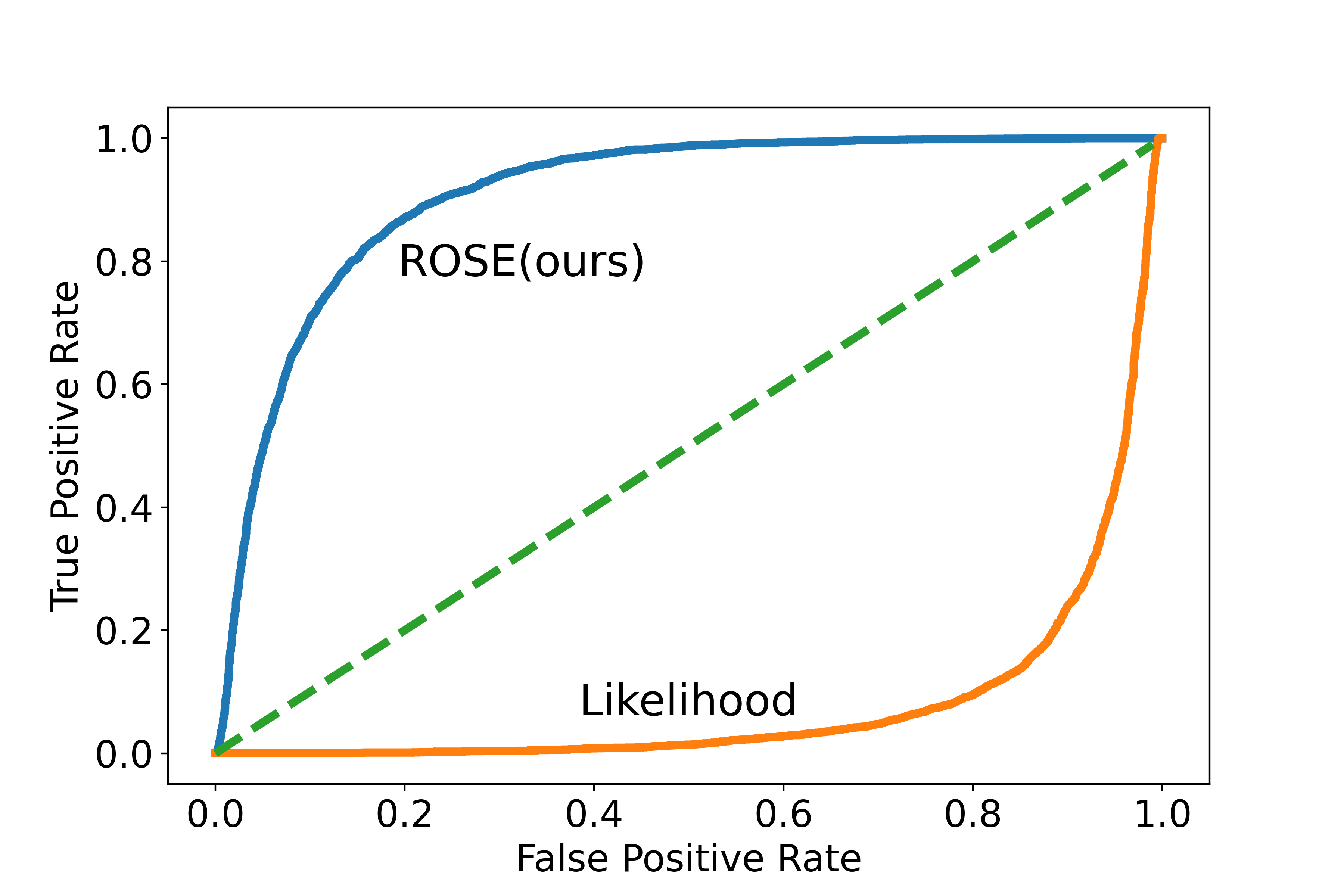}
  \caption{}
\end{subfigure}
\caption{Comparing the ROC curves of using ROSE and the NLL for OOD detection. (a) On Fashion-MNIST vs. MNIST experiment, ROSE improves the AUROC from 0.161 to 0.995. (b) On CIFAR-10 vs. SVHN experiment, ROSE improves the AUROC from 0.081 to 0.907.}
\label{figure 3}
\end{figure}

\subsection{Comparison with Other OOD Scores} \label{section 5.1}

In this section, we compare our method with other benchmark methods.
Our primary evaluation metric is the AUROC, which is a standard measurement for the OOD detection task \citep{hendrycks2018deep}.
We also provide the area under the precision-recall curve (AUPRC) and the false positive rate at 80\% true positive rate (FPR80) as comparison metrics (See Appendix \ref{Appendix C : Additional Results}).
Furthermore, to measure the overall performance of each detection method, we conduct a newly aggregated dataset gathered from the benchmark datasets.
The `overall' dataset is collected from MNIST \citep{mnist}, KMNIST \citep{kmnist}, NotMNIST, Omniglot \citep{omniglot}, SVHN \citep{svhn}, CelebA \citep{celeba}, LSUN \citep{lsun}, noise, and constant by one thousand each.
\footnote{All samples are selected uniformly at random.}
Additionally, for Fashion-MNIST experiments, we add one thousand samples of CIFAR-10 to the overall dataset, and for CIFAR-10 experiments, we add one thousand samples of Fashion-MNIST.
For a detailed description of the datasets we used, see Appendix \ref{Appendix F : Images}.

For the comparison on VAE, we use Likelihood Regret (LR) \citep{xiao2020likelihood}, Likelihood Ratio (LRatio) \citep{ren2019likelihood}, two variants of Input Complexity (IC) \citep{serra2019input}, and NLL.
We have carefully experimented with the other benchmark methods, following the existing literature of \citep{xiao2020likelihood, kingma2018glow}.
Throughout all the experiments on VAE, we adopt importance weighted sampling to approximate the exact likelihood, which is intractable in VAE \citep{burda2015importance}.

The left part of Table \ref{table 2} shows that our proposed score ROSE on VAE successfully corrects the failure of likelihood and achieves state-of-the-art (SOTA) performance across various image domains.
Indeed, for the VAE trained on Fashion-MNIST, all AUROC values are higher than 0.99, indicating the best results except for the noise data. 
For the VAE trained on CIFAR-10, ROSE shows SOTA performance for more than half of the datasets, especially achieving the highest AUROC 0.918 for the overall dataset.
However, despite its overall superior performance, a poor result in CIFAR-10 vs. LSUN shows some limitations of our method.
Moreover, our score performs better on Fashion-MNIST than CIFAR-10.
We guess that the reason for this phenomenon is that the parameters of the VAE trained on Fashion-MNIST are closer to the maximum likelihood estimation, due to the simplicity of training data.
It is also noteworthy that ROSE is stable across all datasets.
For instance, in the CIFAR-10 task, LRatio fails on SVHN/CelebA/constant, and IC(png) fails on noise, while our ROSE shows stable performance above AUROC 0.5 on all datasets.
It is in line with the experimental results of varying brightness, as discussed in Section \ref{section 3.2}.

For the experiment on GLOW, we exclude LR for the comparison method since it can only be utilized in VAE.
We also exclude the result of NLL from Table \ref{table 2} due to its relatively poor performance.
The right part of Table \ref{table 2} shows that ROSE on GLOW achieves the highest overall performance.
Indeed, for the overall dataset, the AUROC is 0.885 and 0.932 on Fashion-MNIST and CIFAR-10 experiments, respectively. 
In other methods, there are more than one datasets that deteriorate the performance of OOD detection.
For instance, in the Fashion-MNIST task, LRatio on SVHN/constant and IC on KMNIST achieve the AUROC less than 0.5.
On the contrary, ROSE has no failure case in which the AUROC is less than 0.5, indicating its stability.

In summary, we demonstrated the stability of ROSE across all datasets we have seen.
Furthermore, our experiments show that ROSE achieves state-of-the-art overall performance both on VAE and GLOW.
It is the only OOD detection score that can be successfully applied to various deep probabilistic generative models, to the best of our knowledge.

\begin{table}[t]
  \centering
  {\fontsize{8}{10}\selectfont  
  \begin{tabular}{cccccccccccc}
    \toprule 
    \multicolumn{1}{c}{} & \multicolumn{5}{c}{VAE} & \multicolumn{1}{c}{} & \multicolumn{5}{c}{GLOW}\\
    \cmidrule{2-7}
    \cmidrule{9-12}
    FMNIST & ROSE & LR$_{\text{E}}$ & NLL & LRatio & IC(png) & IC(jp2) & & ROSE & LRatio & IC(png) & IC(jp2)\\
    \midrule
    MNIST    & \textbf{0.995} & 0.964 & 0.161 & 0.844 & 0.934 & 0.449 &
             & 0.562 & 0.850 & \textbf{0.922} & 0.616 \\
    KMNIST   & \textbf{0.997} & 0.991 & 0.626 & 0.968 & 0.636 & 0.389 &
             & 0.518 & \textbf{0.930} & 0.496 & 0.281\\
    NotMNIST & \textbf{1.000} & \textbf{1.000} & 0.980 & 0.950 & 0.893 & 0.958 &
    & \textbf{0.896} & 0.500 & 0.542 & 0.521 \\
    Omniglot & \textbf{1.000} & \textbf{1.000} & \textbf{1.000} & 0.869 & \textbf{1.000} & \textbf{1.000} &
    & \textbf{1.000} & 0.869 & \textbf{1.000} & \textbf{1.000} \\
    CIFAR-10 & 0.999 & 0.995 & \textbf{1.000} & 0.916 & 0.899 & 0.999 &
             & 0.976 & 0.974 & 0.772 & \textbf{0.998} \\
    SVHN     & \textbf{1.000} & \textbf{1.000} & 0.999 & 0.624 & 0.999 & \textbf{1.000} &
             & 0.951 & 0.147 & 0.718 & \textbf{0.999} \\
    Noise    & 0.996 & 0.997 & \textbf{1.000} & \textbf{1.000} & 0.443 & \textbf{1.000} &
             & \textbf{1.000} & \textbf{1.000} & \textbf{1.000} & \textbf{1.000} \\
    Constant & \textbf{1.000} & 0.998 & 0.953 & 0.647 & \textbf{1.000} & \textbf{1.000} &
             & 0.996 & 0.005 & \textbf{1.000} & \textbf{1.000} \\
    Overall  & \textbf{0.999} & 0.993 & 0.869 & 0.873 & 0.874 & 0.879 &
             & \textbf{0.885} & 0.704 & 0.811 & 0.846\\ 
    \bottomrule
    \toprule
    CIFAR-10 & ROSE & LR$_{\text{E}}$ & NLL & LRatio& IC(png) & IC(jp2) & & ROSE & LRatio & IC(png) & IC(jp2) \\
    \midrule
    SVHN     & 0.907 & \textbf{0.953} & 0.081 & 0.050 & 0.910 & 0.942 &
             & 0.940 & 0.479 & 0.882 & \textbf{0.969} \\
    CelebA   & \textbf{0.778} & 0.682 & 0.428 & 0.333 & 0.588 & 0.303 &
             & 0.694 & \textbf{0.732} & 0.697 & 0.381 \\
    LSUN     & 0.526 & 0.513 & 0.595 & 0.593 & \textbf{0.660} & 0.322 &
             & 0.730 & \textbf{0.832} & 0.759 & 0.388 \\
    MNIST    & \textbf{0.999} & 0.996 & 0.000 & 0.825 & 0.985 & 0.980 &
             & 0.998 & 0.002 & 0.999 &\textbf{1.000} \\
    FMNIST   & \textbf{0.996} & 0.990 & 0.038 & 0.783 & 0.992 & 0.989 &
             & 0.988 & 0.004 & 0.997 & \textbf{1.000} \\
    NotMNIST & \textbf{0.998} & 0.996 & 0.030 & 0.506 & 0.989 & 0.989 &
             & 0.992 & 0.003 & 0.995 & \textbf{0.998} \\
    Noise    & \textbf{1.000} & 0.997 & \textbf{1.000} & \textbf{1.000} & 0.172 & 0.115 &
             & \textbf{1.000} & \textbf{1.000} & \textbf{1.000} & \textbf{1.000} \\
    Constant & 0.973 & 0.995 & 0.277 & 0.223 & \textbf{1.000} & \textbf{1.000} &
             & \textbf{1.000} & 0.001 & \textbf{1.000} & \textbf{1.000} \\
    Overall  & \textbf{0.918} & 0.913 & 0.246 & 0.610 & 0.826 & 0.763 &
             & \textbf{0.932} & 0.319 & \textbf{0.932} & 0.870 \\ 
    \bottomrule
  \end{tabular}
  }
  \vspace{2mm} 
  \caption{AUROC of robust out-of-distribution score (ROSE) and other OOD detection metrics on the benchmark image datasets. 
  The top part of the table uses a VAE (\textbf{left}) and GLOW (\textbf{right}) model trained on Fashion-MNIST and the bottom part of the table uses a VAE (\textbf{left}) and GLOW (\textbf{right}) model trained on CIFAR-10. AUROC values in each row indicate the ability to distinguish an OOD sample (ex: SVHN or CelebA). For all of our experiments, we employ test samples for in-distribution data (ex: CIFAR-10 test). The results on `overall' OOD dataset in the last row is a mixed dataset that consists of images, which are chosen uniformly at random from various benchmark OOD datasets.}
  \label{table 2}
\end{table}

\subsection{Better Understanding ROSE} \label{section 5.2}
In this section, we investigate the efficiency and robustness of ROSE.
We measure the inference time of ROSE and compare it with other benchmark methods.
We also verify that ROSE is robust to various changes in the capacity of VAE and in the number of given in-distribution samples.
These observations imply that our proposed ROSE highly exceeds all other methods in three aspects; performance, robustness, and time-efficiency. 
Further discussion of ROSE, including another approximation of the Fisher information matrix using the diagonal preconditioner and robustness in changing brightness, are presented in Appendix \ref{Appendix C : Additional Results}.

\textbf{Runtime} 
Figure \ref{figure 4} shows the inference times\footnote{Times are for a single NVIDIA Geforce RTX 3090 GPU and Intel Core i9-10900K CPU.} on VAE of benchmark OOD detection methods, consisting of Input Complexity \citep{serra2019input}, Likelihood Ratio \citep{ren2019likelihood}, Likelihood Regret \citep{xiao2020likelihood}, and our proposed ROSE.
The horizontal axis represents the number of images processed per second, and the vertical axis represents AUROC for the overall OOD dataset.
In the case of Likelihood Regret, we can observe a slow processing time of about 2.5 images per second due to the additional optimization per image, despite its outstanding performance.
On the contrary, ROSE handles about 110 images per second, significantly higher than the baseline of real-time detection, and records the best AUROC, 0.918, among the five benchmark methods.

\textbf{Robustness to the sampling number} 
One concern for ROSE is that the performance may be unstable with respect to the number of samples used to obtain the Fisher matrix.
Thus, to verify the robustness of ROSE, we experiment with computing ROSE ten times for each sampling number, varying it from ten to 10K. 
Figure \ref{figure 5} shows the box plot of AUROC on the overall dataset that we discussed at the beginning of Section \ref{section 5.1}. 
The observations of this experiment confirm that our method is robust to the sampling number.
Even when the sampling number is extremely small, it is astonishing that ROSE still attains good performance.
Furthermore, as the number of sampling increases, the AUROC also increases gradually and converges to one highest result. 

\textbf{Robustness to the capacity of VAE}
We further conduct experiments to observe whether ROSE is robust in the model capacity.
The capacity is defined by width, which indicates the number of channels of VAE.
The result of this experiment is in Table \ref{table 3}; here, 1$\times$ denotes the standard size of VAE that we use throughout this paper.
Table \ref{table 3} shows the variation of AUROC on various datasets as the model capacity changes. The results show that our score is highly stable on such variations.

 \begin{table}
  \centering
  {\fontsize{10}{10}\selectfont  
    \begin{tabular}{ccccccccccc}
    \toprule
    \multicolumn{2}{c}{
    \multirow{2}{*}{Capacity}
    } & \multicolumn{4}{c}{FMNIST} & & \multicolumn{4}{c}{CIFAR-10}\\
    \cmidrule{3-6}
    \cmidrule{8-11}
    \multicolumn{2}{c}{}
    & 0.5$\times$ & 1$\times$ & 2$\times$ & 4$\times$ &
    & 0.5$\times$ & 1$\times$ & 2$\times$ & 4$\times$ \\
    \midrule
    \multicolumn{2}{c}{MNIST}
    & 0.997 & 0.995 & 0.991 & 0.988 & & 1.000 & 0.999 & 0.998 & 0.999\\
    \multicolumn{2}{c}{KMNIST}
    & 0.998 & 0.997 & 0.994 & 0.993 & & 0.999 & 0.999 & 0.998 & 0.999\\
    \multicolumn{2}{c}{NotMNIST}
    & 1.000 & 1.000 & 1.000 & 1.000 & & 0.998 & 0.998 & 0.995 & 0.994\\
    \multicolumn{2}{c}{SVHN}
    & 1.000 & 1.000 & 1.000 & 1.000 & & 0.809 & 0.907 & 0.851 & 0.822\\
    \multicolumn{2}{c}{CelebA}
    & 0.999 & 0.999 & 1.000 & 0.998 & & 0.716 & 0.778 & 0.792 & 0.786\\
    \multicolumn{2}{c}{LSUN}
    & 0.999 & 0.999 & 1.000 & 0.999 & & 0.536 & 0.526 & 0.516 & 0.529\\
    \multicolumn{2}{c}{Noise}
    & 0.999 & 0.996 & 0.993 & 0.992 & & 1.000 & 1.000 & 0.993 & 0.940\\
    \multicolumn{2}{c}{Constant}
    & 1.000 & 1.000 & 1.000 & 1.000 & & 0.990 & 0.973 & 0.923 & 0.919\\
    \bottomrule
    \end{tabular}
  }
  \vspace{2mm} 
  \caption{Robustness of ROSE with respect to the capacity of VAE model.
  Every VAE model is trained on Fashion-MNIST and CIFAR-10, respectively.}
  \label{table 3}
\end{table}

\begin{figure}
\centering
\begin{minipage}{.48\textwidth}
  \centering
  \includegraphics[width=.98\textwidth]{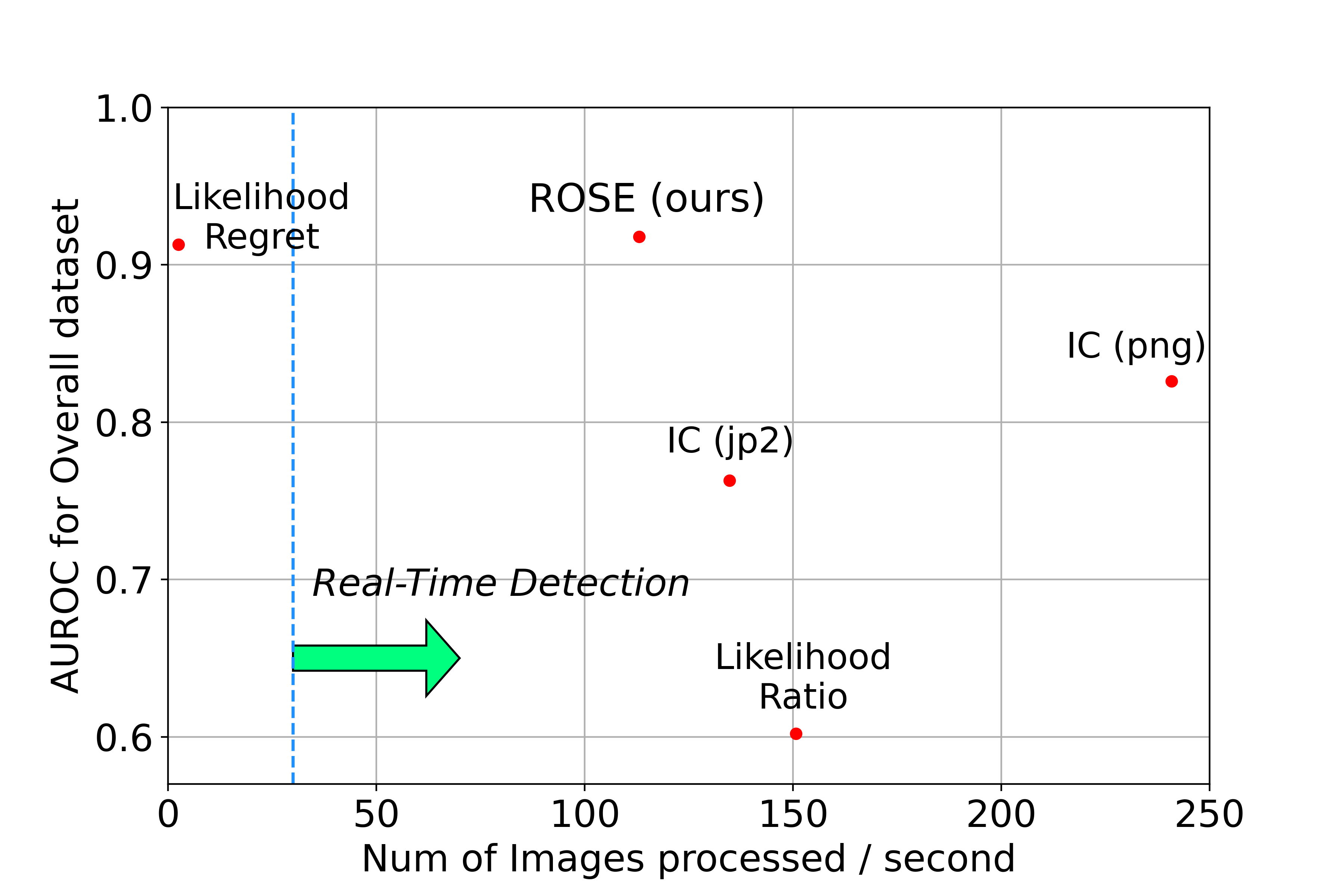}
  \captionof{figure}{Runtime of ROSE and other benchmark OOD detection methods on the same VAE model trained on CIFAR-10.}
  \label{figure 4}
\end{minipage}
\hspace{2mm}
\begin{minipage}{.48\textwidth}
  \centering
  \includegraphics[width=.98\textwidth]{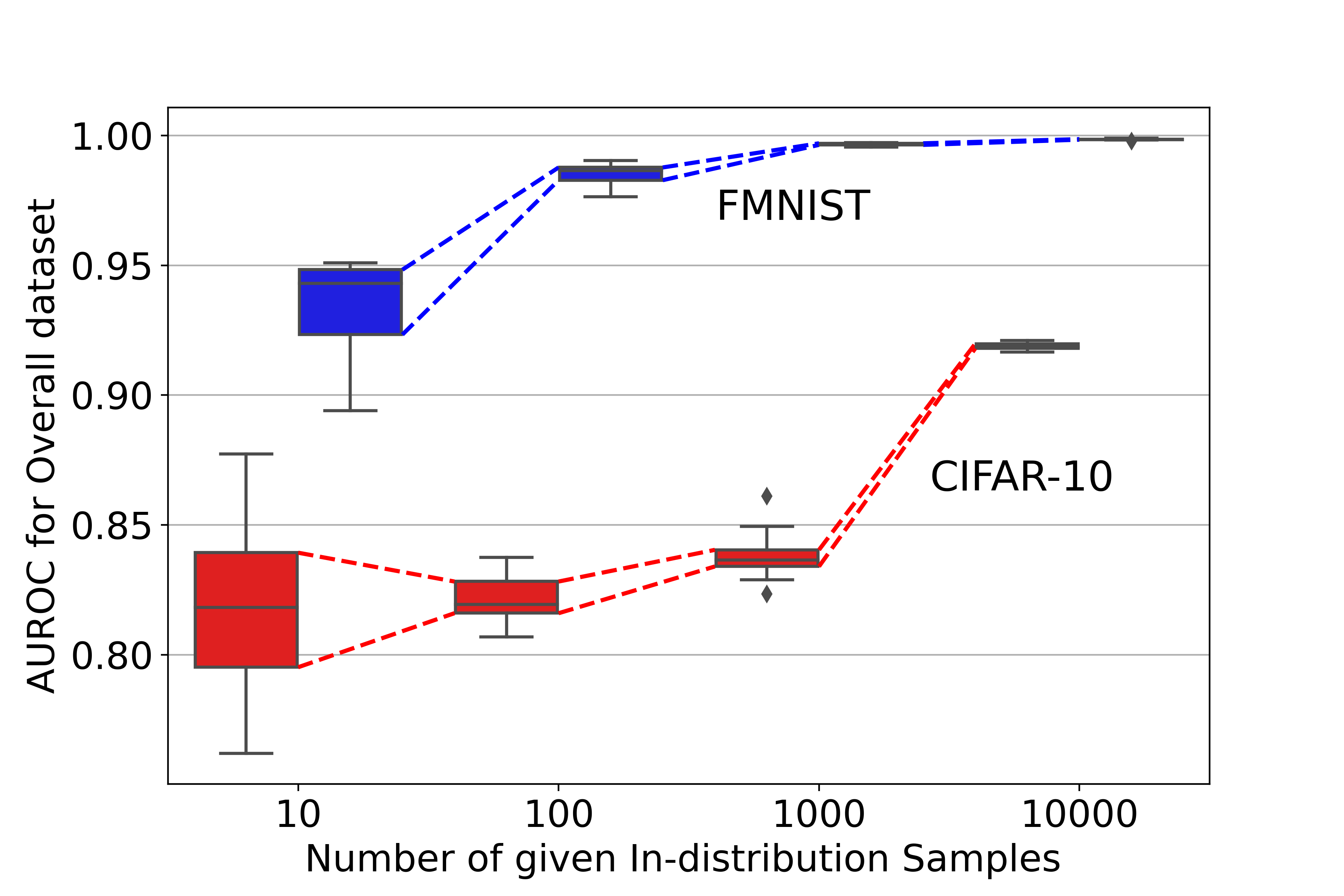}
  \captionof{figure}{AUROCs of ROSE with respect to the number of given in-distribution samples, which are used to gain the Fisher matrix.}
  \label{figure 5}
\end{minipage}
\end{figure}

\section{Conclusion}\label{section 6}
In this paper, we investigate the previous approaches, which rely on generated outliers, and show that such methods are vulnerable.
Accordingly, we propose a new score for OOD detection based on the marginal likelihood ratio statistic, which does not employ outlier exposure.
Second-order approximation of the statistic reformulates our score into the tractable form.
We further apply the EKFAC algorithm to accelerate our method regarding time-efficiency and memory consumption.
As a result, our score shows the best overall performance compared to other benchmark OOD detection methods.
Moreover, the robustness of our metric is demonstrated in a variety of perspectives.
We also propose a custom OOD dataset as a new criterion for a fair evaluation.
We hope that our research will be further developed and applied to a real-world situation.
To the best of our knowledge, our research has no negative societal impact.

\newpage
\bibliographystyle{unsrtnat}
\bibliography{references}

\newpage
\appendix
\section{Additional details about our Proposed Method} \label{Appendix A : Additional details}
In this section, we introduce a pseudo-code computing the inverse of the Fisher information matrix by using the EKFAC algorithm and our final score ROSE. 
By applying 
\begin{equation} \label{tensor}
    (A\otimes B)\mathrm{vec}(C) = \mathrm{vec}(B^{T} C A),    
\end{equation}
it facilitates low memory consumption.
After letting $p=\infty$ in \eqref{rose_final}, computation of ROSE follows Algorithm \ref{algorithm 2}. Here, 
$$\mathcal{F}_{X}^{\minus1} = (U_A\otimes U_B)^T \mathrm{diag}\big({\Sigma^{(l)}}\big)^{\minus1} (U_A\otimes U_B),$$
can be computed by applying \eqref{tensor} to $U_{A}$, $U_{B}$, and $\Sigma$.
For the exact computational algorithm, see \citep{george2018fast}.

\begin{algorithm}[H]
\caption{Preprocessing by EKFAC}
\begin{algorithmic}[1] 
\Require Training samples $X = \{x_{1}, \ldots, x_{N}\}$
\Procedure{Calculate Fisher Matrix}{$X$}
    \For{each layer $l$}
        \State $U_A^{(l)}, S_A^{(l)}  \gets$ eigen-decomposition of $\mathbb{E}_X\big[ h^{(l)} h^{(l)T}\big]$
        \State $U_B^{(l)}, S_B^{(l)}  \gets$ eigen-decomposition of $\mathbb{E}_X\big[ \delta^{(l)} \delta^{(l)T}\big]$
        \State $\Sigma^{(l)}  \gets \mathbb{E}_X\big[\mathrm{vec}\big(U_B^{(l)T}\, s\big(x;\theta_0^{(l)}\big)\, U_A^{(l)}\big)^2\big]$ 
    \EndFor
    \State \textbf{return} $U_A, U_B, \Sigma$
\EndProcedure
\Procedure{Calculate mean and variance}{$X, U_A, U_B, \Sigma$}
    \For{each layer $l$}
        \State $\mu_X^{(l)}  \gets$ $\mathbb{E}_X\big[s\big(x;\theta^{(l)}_0\big)^T (U_A\otimes U_B)^T \mathrm{diag}\big({\Sigma^{(l)}}\big)^{\minus1} (U_A\otimes U_B) s\big(x;\theta^{(l)}_0\big) \big]$
        \State $\sigma_X^{(l)}  \gets$ $\sigma_X\big[s\big(x;\theta^{(l)}_0\big)^T (U_A\otimes U_B)^T \mathrm{diag}\big({\Sigma^{(l)}}\big)^{\minus1} (U_A\otimes U_B) s\big(x;\theta^{(l)}_0\big) \big]$
    \EndFor
    \State \textbf{return} $\mu_X, \sigma_X$
\EndProcedure
\end{algorithmic}
\label{algorithm 1}
\end{algorithm}
\begin{algorithm}[H]
\caption{Computing ROSE}
\begin{algorithmic}[1] 
\Require Test sample $\Tilde{x}$, Inverse of Fisher matrix $\mathcal{F}_{X}^{\minus1}$, layer-wise mean $\mu_{X}^{(l)}$, layer-wise variance $\sigma_{X}^{(l)}$
    \For{each layer $l$}
        \State $\hat{s}^l \gets s\big(\Tilde{x};\theta_0^{(l)}\big)\,\mathcal{F}_{X}^{\minus1}\,s\big(\Tilde{x};\theta_0^{(l)}\big)$
        \State $\hat{s}^l \gets \big(\hat{s}^l-\mu_X^{(l)}\big)\,/\,\sigma_X^{(l)}$
    \EndFor
    \State ROSE$\,\, \gets \big\lVert\mathrm{ReLU}(\hat{s}) \big\rVert_{\infty}$
    \State \textbf{return} ROSE
\end{algorithmic}
\label{algorithm 2}
\end{algorithm}

\newpage
\section{Experimental settings} \label{Appendix B : Our Experimental Settings}

For all experiments on VAE, we used 5000 random test samples of each dataset, and for all experiments on GLOW, we used 1000 random test samples of each dataset.
All input images are resized to 32 $\times$ 32 except LSUN.
For LSUN, due to their irregular rectangular shapes, we resized the shorter side of height and width to 32 while preserving the ratio, then center cropped to get the size of 32 $\times$ 32.

\subsection{VAE} \label{Appendix B.1 : VAE}
Following the existing literature \citep{nalisnick2018deep} and \citep{xiao2020likelihood}, we evaluate two VAEs trained on Fashion-MNIST and CIFAR-10, respectively.
The VAEs are trained for 100 epochs with batch size 64, learning rate 1e-3 using Adam optimizer \citep{kingma2014adam}, decaying it to half every 30 epochs.
The encoder part of VAEs consists of four convolutional layers of kernel size 4, strides 2, without biases.
Also, the channel size starts from $n$ and doubles at each layer, where $n$ is 32 for Fashion-MNIST and $n$ is 64 for CIFAR-10.
The last two layers of the encoder encode a mean and variance, respectively, and their channel sizes are the same as the latent dimension, 100 for Fashion-MNIST and 200 for CIFAR-10.
The decoder part of VAEs has a symmetric structure to the encoder and reconstructs the image pixel-wise.
We apply the EKFAC algorithm and the diagonal preconditioner on all the convolutional layers of the encoder.
The comparison of the EKFAC algorithm and the diagonal preconditioner (DIAG) is presented in Table \ref{table 6}.
For better estimation of likelihood, in LR(E) \citep{xiao2020likelihood}, IC \citep{serra2019input}, and LRatio \citep{ren2019likelihood}, we employ the importance weighted autoencoder (IWAE) likelihood estimation \citep{burda2015importance} with sampling number 20.
The codes for the EKFAC algorithm are obtained from \url{https://github.com/alecwangcq/KFAC-Pytorch}\footnote{Unfortunately, we couldn't find its license from the website.}.

\subsection{GLOW} \label{Appendix B.2 : GLOW}
GLOW follows the existing configuration in \citep{kingma2018glow} except the number of flow modules.
The number of flow modules follows \citep{nalisnick2018deep, serra2019input}:
for training Fashion-MNIST, we use two blocks of 16 flows, and for CIFAR-10, we use three blocks of 8 flows with multi-scale.
For coupling blocks, 3-layer networks are used, and the number of hidden units is 200 for Fashion-MNIST and 400 for CIFAR-10.
The GLOWs are trained for 50 epochs (almost 100K steps) with batch size 64, learning rate 5e-4 using Adamax optimizer.
We also apply weight decay 5e-5 only for CIFAR-10.
Furthermore, we compute ROSE on the invertible 1x1 convolutional (InvConv) layers in GLOW, using the diagonal preconditioner for fast computation.
The comparison of applying InvConv layers and all convolutional layers is shown in Table \ref{table 7}.
The codes for training GLOW are obtained from \url{https://github.com/y0ast/Glow-PyTorch} (MIT License).

\subsection{Implementing Competing Methods} \label{Appendix B.3 : Implementing Competing Methods}

\paragraph{Likelihood Regret and Input Complexity}
For our comparison experiments (Table \ref{table 2}), LR(E) \citep{xiao2020likelihood} and IC \citep{serra2019input} follow the experimental settings mentioned in \citep{xiao2020likelihood}.
The implementation codes for LR(E) and IC are obtained from \url{https://github.com/XavierXiao/Likelihood-Regret} (MIT License). 

\paragraph{Likelihood Ratio}
Our main and background models used for LRatio \citep{ren2019likelihood} experiments have the same structure as those mentioned in Appendix \ref{Appendix B.1 : VAE}.
The implementation code for LRatio is obtained from \url{https://github.com/XavierXiao/Likelihood-Regret} (MIT License). 
\begin{itemize}
    \item Experiment in Section \ref{section 3.2} (Table \ref{table 1})\\
    For each perturbation, the implementation details are as following:
    \begin{itemize}
        \item `SVD' creates a blurred image by discarding the bottom $K$ nonzero singular values for each channel, as in \citep{choi2019novelty}.
        In Table \ref{table 1}, we set $K$ to 28, which performs better than 22.
        Comparison of $K = 22$ and $K = 28$ is presented in Table \ref{table 10}.
        \item `DCT' uses only high magnitude $K$ signals in the frequency domain to perturb images, as in \citep{choi2019novelty}, and we set this $K$ to 28.
        \item `GB' uses the kernel size of 3 and the standard deviation of 0.8.
        \item `Random' selects pixels with the probability of $\mu$ for a given image and replaces them with uniformly sampled random values between 0 and 255.
        In Table \ref{table 1}, we experiment with $\mu$ at 0.2, which performs better on MNIST than 0.3.
        Comparison of $\mu = 0.2$ and $\mu = 0.3$ is presented in Table \ref{table 10}.
    \end{itemize}
    Furthermore, \citep{ren2019likelihood} suggest $\lambda$ as another hyperparameter of $L_2$ regularization to improve performance.
    For all perturbations in Table \ref{table 1}, we use $\lambda = 0$, which performs better than $\lambda = 10$.
    
    \item Experiment in Section \ref{section 5.1} (Table \ref{table 2})\\
    For comparison with ROSE, we fix the perturbation `Random' for LRatio, and select the other configuration of the background models which produce the highest AUROC on MNIST dataset.
    \begin{itemize}
        \item For VAE trained on Fashion-MNIST, we set $\mu = 0.3$ and $\lambda = 0$.
        \item For VAE trained on CIFAR-10, we set $\mu = 0.2$ and $\lambda = 0$.
        \item For GLOW trained on Fashion-MNIST, we set $\mu = 0.3$ and $\lambda = 0.01$.
        \item For GLOW trained on CIFAR-10, we set $\mu = 0.2$ and $\lambda = 5e-5$.
    \end{itemize}
\end{itemize}

\paragraph{RND}
RND \citep{choi2019novelty} consists of a trainable predictor $f$ and randomly initialized target nets $g_0$ and $g_1$.
The target nets $g_0$ and $g_1$ are used to extract the representation of the in-distribution
and data with a specific perturbation applied to in-distribution, respectively.
Predictor $f$ is trained to be close to the representation of $g_0$ for in-distribution images and to be close to $g_1$ for the perturbed.
The OOD score in RND is defined as the $L_2$ distance of outputs of $f$ and $g_0$ when inputting test samples.
The model structures and training settings of RND follow the ones presented in \citep{choi2019novelty}.
The perturbations used in RND are SVD, DCT, GB, and Random with the same conditions as in LRatio.
The comparison of various perturbations on RND models is presented in Table \ref{table 11}.
The implementation code for RND is obtained from \url{https://github.com/sungikchoi/NVB} (MIT License).

\newpage
\section{Additional Quantitative Results} \label{Appendix C : Additional Results}
\subsection{Results for AUPRC and FPR80}
\begin{table}[H]
  \centering
  {\fontsize{8}{8}\selectfont  
  \begin{tabular}{cccccccccccc}
    \toprule 
    \multicolumn{1}{c}{} & \multicolumn{5}{c}{VAE} & \multicolumn{1}{c}{} & \multicolumn{5}{c}{GLOW}\\
    \cmidrule{2-7}
    \cmidrule{9-12}
    FMNIST & ROSE & LR$_{\text{E}}$ & NLL & LRatio & IC(png) & IC(jp2) & & ROSE & LRatio & IC(png) & IC(jp2)\\
    \midrule
    MNIST    & \textbf{0.996} & 0.968 & 0.334 & 0.857 & 0.948 & 0.494 & 
             & 0.600 & 0.905 & \textbf{0.942} & 0.663 \\
    KMNIST   & \textbf{0.997} & 0.991 & 0.622 & 0.963 & 0.666 & 0.436 & 
             & 0.547 & \textbf{0.954} & 0.486 & 0.367 \\
    NotMNIST & \textbf{1.000} & \textbf{1.000} & 0.980 & 0.875 & 0.851 & 0.946 & 
             & \textbf{0.876} & 0.490 & 0.479 & 0.465 \\
    Omniglot & \textbf{1.000} & \textbf{1.000} & \textbf{1.000} & 0.912 & \textbf{1.000} & \textbf{1.000} & 
             & \textbf{1.000} & 0.835 & \textbf{1.000} & \textbf{1.000} \\
    CIFAR-10 & 0.999 & 0.996 & \textbf{1.000} & 0.839 & 0.911 & 0.999 & 
             & 0.979 & 0.940 & 0.845 & \textbf{0.998} \\
    SVHN     & \textbf{1.000} & \textbf{1.000} & 0.999 & 0.512 & 0.998 & \textbf{1.000} & 
             & 0.953 & 0.332 & 0.799 & \textbf{0.999} \\
    Noise    & 0.995 & 0.998 & \textbf{1.000} & \textbf{1.000} & \textbf{1.000} & \textbf{1.000} & 
             & \textbf{1.000} & \textbf{1.000} & \textbf{1.000} & \textbf{1.000} \\
    Constant & \textbf{1.000} & 0.997 & 0.948 & 0.533 & \textbf{1.000} & \textbf{1.000} & 
             & 0.995 & 0.308 & \textbf{1.000} & \textbf{1.000} \\
    Overall  & \textbf{0.998} & 0.994 & 0.759 & 0.755 & 0.858 & 0.804 & 
             & \textbf{0.847} & 0.572 & 0.757 & 0.727 \\
    \bottomrule
    \toprule
    CIFAR-10 & ROSE & LR$_{\text{E}}$ & NLL & LRatio& IC(png) & IC(jp2) & & ROSE & LRatio & IC(png) & IC(jp2) \\
    \midrule
    SVHN     & 0.921 & \textbf{0.959} & 0.320 & 0.312 & 0.921 & 0.953 & 
             & 0.952 & 0.337 & 0.886 & \textbf{0.973} \\
    CelebA   & \textbf{0.798} & 0.723 & 0.487 & 0.410 & 0.587 & 0.417 & 
             & 0.720 & 0.611 & \textbf{0.746} & 0.442 \\
    LSUN     & 0.551 & 0.543 & 0.589 & 0.560 & \textbf{0.672} & 0.394 & 
             & 0.739 & 0.701 & \textbf{0.803} & 0.423 \\
    MNIST    & \textbf{1.000} & 0.997 & 0.307 & 0.841 & 0.991 & 0.989 & 
             & 0.999 & 0.307 & \textbf{1.000} & \textbf{1.000} \\
    FMNIST   & \textbf{0.996} & 0.991 & 0.310 & 0.732 & 0.995 & 0.994 & 
             & 0.992 & 0.307 & 0.998 & \textbf{1.000} \\
    NotMNIST & \textbf{0.997} & 0.996 & 0.310 & 0.494 & 0.992 & 0.994 & 
             & 0.995 & 0.307 & 0.996 & \textbf{0.998} \\
    Noise    & \textbf{1.000} & 0.998 & \textbf{1.000} & \textbf{1.000} & 0.420 & 0.396 & 
             & \textbf{1.000} & \textbf{1.000} & \textbf{1.000} & \textbf{1.000} \\
    Constant & 0.972 & 0.995 & 0.366 & 0.350 & \textbf{1.000} & \textbf{1.000} & 
             & \textbf{1.000} & 0.307 & \textbf{1.000} & \textbf{1.000} \\
    Overall  & \textbf{0.887} & 0.882 & 0.357 & 0.516 & 0.758 & 0.662 & 
             & 0.915 & 0.363 & \textbf{0.922} & 0.779 \\
    \bottomrule
  \end{tabular}
  }
  \vspace{2mm} 
  \caption{
  AUPRC of ROSE and other OOD detection metrics on the benchmark image datasets. 
  The top part of the table uses a VAE (\textbf{left}) and GLOW (\textbf{right}) model trained on Fashion-MNIST, and the bottom part of the table uses a VAE (\textbf{left}) and GLOW (\textbf{right}) model trained on CIFAR-10.
  To clarify the AUPRC, we define well-detected real OOD samples as True Negative (TN) while regarding well-detected real in-distribution samples as True Positive (TP).
  For all of our experiments, we employ test samples for in-distribution data (ex: CIFAR-10 test).
  }
  \label{table 4}
\end{table}


\begin{table}[H]
  \centering
  {\fontsize{8}{8}\selectfont  
  \begin{tabular}{cccccccccccc}
    \toprule 
    \multicolumn{1}{c}{} & \multicolumn{5}{c}{VAE} & \multicolumn{1}{c}{} & \multicolumn{5}{c}{GLOW}\\
    \cmidrule{2-7}
    \cmidrule{9-12}
    FMNIST & ROSE & LR$_{\text{E}}$ & NLL & LRatio & IC(png) & IC(jp2) & & ROSE & LRatio & IC(png) & IC(jp2)\\
    \midrule
    MNIST    & 0.007 & 0.056 & 0.975 & 0.246 & \textbf{0.000} & 0.805 & 
             & 0.664 & 0.199 & \textbf{0.129} & 0.590 \\
    KMNIST   & \textbf{0.003} & 0.007 & 0.680 & 0.036 & 0.547 & 0.873 & 
             & 0.741 & \textbf{0.099} & 0.787 & 0.995 \\
    NotMNIST & \textbf{0.000} & \textbf{0.000} & 0.011 & 0.001 & 0.201 & 0.011 & 
             & \textbf{0.195} & 0.800 & 0.920 & 0.936 \\
    Omniglot & \textbf{0.000} & \textbf{0.000} & \textbf{0.000} & \textbf{0.000} & \textbf{0.000} & \textbf{0.000} & 
             & \textbf{0.000} & 0.218 & \textbf{0.000} & \textbf{0.000} \\
    CIFAR-10 & 0.001 & 0.003 & \textbf{0.000} & 0.054 & 0.205 & \textbf{0.000} & 
             & 0.025 & \textbf{0.000} & 0.328 & 0.001 \\
    SVHN     & \textbf{0.000} & \textbf{0.000} & 0.002 & 0.983 & \textbf{0.000} & \textbf{0.000} & 
             & 0.062 & 1.000 & 0.409 & \textbf{0.000} \\
    Noise    & 0.012 & 0.004 & \textbf{0.000} & \textbf{0.000} & 0.594 & \textbf{0.000} & 
             & \textbf{0.000} & \textbf{0.000} & \textbf{0.000} & \textbf{0.000} \\
    Constant & \textbf{0.000} & \textbf{0.000} & 0.053 & 0.915 & \textbf{0.000} & \textbf{0.000} & 
             & \textbf{0.000} & 1.000 & \textbf{0.000} & \textbf{0.000} \\
    Overall  & \textbf{0.002} & 0.003 & 0.067 & 0.117 & 0.249 & 0.102 & 
             & \textbf{0.206} & 0.883 & 0.328 & 0.321 \\
    \bottomrule
    \toprule
    CIFAR-10 & ROSE & LR$_{\text{E}}$ & NLL & LRatio & IC(png) & IC(jp2) &
             & ROSE & LRatio & IC(png) & IC(jp2) \\
    \midrule
    SVHN     & 0.144 & 0.068 & 0.986 & 0.987 & 0.122 & \textbf{0.074} & 
             & 0.097 & 0.960 & 0.184 & \textbf{0.029} \\
    CelebA   & \textbf{0.373} & 0.508 & 0.802 & 0.886 & 0.678 & 0.867 & 
             & 0.511 & 0.679 & \textbf{0.483} & 0.846 \\
    LSUN     & 0.766 & 0.759 & 0.694 & 0.724 & \textbf{0.560} & 0.916 & 
             & 0.468 & 0.422 & \textbf{0.385} & 0.898 \\
    MNIST    & \textbf{0.001} & 0.004 & 1.000 & 0.350 & 0.019 & 0.023 & 
             & 0.003 & 1.000 & 0.001 & \textbf{0.000} \\
    FMNIST   & \textbf{0.004} & 0.011 & 0.998 & 0.442 & 0.013 & 0.013 & 
             & 0.018 & 1.000 & 0.004 & \textbf{0.000} \\
    NotMNIST & \textbf{0.001} & 0.004 & 0.998 & 0.851 & 0.020 & 0.013 & 
             & 0.014 & 1.000 & 0.008 & \textbf{0.005} \\
    Noise    & \textbf{0.000} & 0.004 & \textbf{0.000} & \textbf{0.000} & 0.870 & 0.901 & 
             & \textbf{0.000} & \textbf{0.000} & \textbf{0.000} & \textbf{0.000} \\
    Constant & 0.019 & 0.002 & 0.988 & 0.996 & \textbf{0.000} & \textbf{0.000} & 
             & \textbf{0.000} & 1.000 & \textbf{0.000} & \textbf{0.000} \\
    Overall  & \textbf{0.080} & 0.081 & 0.999 & 0.875 & 0.328 & 0.725 & 
             & \textbf{0.072} & 1.000 & 0.092 & \textbf{0.072} \\
    \bottomrule
  \end{tabular}
  }
  \vspace{2mm} 
  \caption{
  FPR80 of ROSE and other OOD detection metrics on the benchmark image datasets. 
  The top part of the table uses a VAE (\textbf{left}) and GLOW (\textbf{right}) model trained on Fashion-MNIST, and the bottom part of the table uses a VAE (\textbf{left}) and GLOW (\textbf{right}) model trained on CIFAR-10.
  To clarify the FPR80, we define TN and TP in the same way as in Table \ref{table 4}.
  For all of our experiments, we employ test samples for in-distribution data (ex: CIFAR-10 test).
  }
  \label{table 5}
\end{table}

\subsection{DIAG vs. EKFAC}

We introduce the diagonal preconditioner (DIAG) and EKFAC as two approximation methods of computing the inverse of the Fisher information matrix.
As shown in Table \ref{table 6}, the overall performance of EKFAC is slightly superior to DIAG, but the difference is tiny.
This fact indicates ROSE on VAE is also robust to preconditioning.

\begin{table}[H]
  \centering
  {\fontsize{8}{9}\selectfont  
  \begin{tabular}{ccccccccccc}
    \toprule 
    FMNIST & MNIST & KMNIST & NotMNIST & CIFAR-10 & SVHN & Noise & Constant & Overall\\
    \midrule
    DIAG   & 0.971 & 0.978 & 0.997 & 0.992 & 0.997 & 0.885 & 0.997 & 0.980 \\
    EKFAC  & 0.995 & 0.997 & 1.000 & 0.999 & 1.000 & 0.992 & 1.000 & 0.998 \\ 
    \bottomrule
    \toprule
    CIFAR-10 & SVHN & CelebA & LSUN & MNIST & FMNIST & Noise & Constant & Overall\\
    \midrule
    DIAG   & 0.877 & 0.713 & 0.512 & 0.997 & 0.962 & 0.911 & 0.957 & 0.889 \\
    EKFAC  & 0.906 & 0.778 & 0.526 & 0.999 & 0.996 & 1.000 & 0.973 & 0.918 \\
    
    \bottomrule
  \end{tabular}
  }
  \vspace{2mm} 
  \caption{
  AUROC of ROSE on VAE with Diagonal preconditioner (DIAG) vs. EKFAC preconditioner.
  For both DIAG and EKFAC, we utilize all convolutional layers in the encoder.
  For all situations, EKFAC outperforms DIAG.
  }
  \label{table 6}
\end{table}

\subsection{Addtional results for GLOW}

As mentioned in Appendix \ref{Appendix B.2 : GLOW}, ROSE on GLOW is measured using only invertible convolutional layers.
Results using all layers of GLOW are shown in Table \ref{table 7}.

\begin{table}[H]
  \centering
  {\fontsize{8}{9}\selectfont  
  \begin{tabular}{ccccccccccc}
    \toprule 
    FMNIST & MNIST & KMNIST & NotMNIST & CIFAR-10 & SVHN & Noise & Constant & Overall\\
    \midrule
    InvConv & 0.562 & 0.518 & 0.896 & 0.976 & 0.951 & 1.000 & 0.996 & 0.885 \\
    Full    & 0.733 & 0.787 & 0.829 & 0.858 & 0.835 & 1.000 & 0.852 & 0.857 \\ 
    \bottomrule
    \toprule
    CIFAR-10 & SVHN & CelebA & LSUN & MNIST & FMNIST & Noise & Constant & Overall\\
    \midrule
    InvConv & 0.940 & 0.694 & 0.730 & 0.998 & 0.988 & 1.000 & 1.000 & 0.932 \\
    Full    & 0.827 & 0.572 & 0.669 & 0.802 & 0.798 & 0.999 & 0.952 & 0.809 \\
    
    \bottomrule
  \end{tabular}
  }
  \vspace{2mm} 
  \caption{
  AUROC of ROSE on GLOW with Invertible 1$\times$1 convolutional layers (InvConv) vs. All convolutional layers (Full).
  Explicitly, `InvConv' indicates computing ROSE at only InvConv layers, and `Full' indicates computing ROSE at all convolutional layers, including InvConv layers and coupling block layers.
  Note that each flow module has one InvConv layer and one coupling block consisting of three convolutional layers \citep{kingma2018glow, nalisnick2018deep}.
  The overall performance of InvConv is slightly superior to Full.
  For Fashion-MNIST, Full is better than InvConv on MNIST/KMNIST only and worse than InvConv on all others.
  For CIFAR-10, InvConv is better than Full on all datasets.
  }
  \label{table 7}
\end{table}

\subsection{Results of Likelihood Ratio Test and SVD-RND}

\begin{table}[H]
    \setlength{\tabcolsep}{2.5pt}
    \centering
    {\fontsize{9}{10}\selectfont  
    \aboverulesep=0ex 
    \belowrulesep=0ex 
    \begin{tabular}{llccccccccc|cc}
        \toprule
        \multicolumn{2}{c}{Trained on CIFAR-10} & \multicolumn{9}{c}{Brightness of SVHN} & \multirow{2}{*}{Mean} & \multirow{2}{*}{Std} \\
        \cmidrule(lr){3-11}
        Method & Perturbation& 0.2$\times$ & 0.4$\times$ & 0.6$\times$ & 0.8$\times$ & 1$\times$ & 1.2$\times$ & 1.4$\times$ & 1.6$\times$ & \multicolumn{1}{c}{1.8$\times$} & &\\
        \midrule
        \multirow{4}{*}{LRatio}
            & SVD & 0.296 & 0.726 & 0.874 & 0.898 & 0.913 & 0.910 & 0.880 & 0.825 & 0.749 &0.786&0.185 \\
            & DCT & 0.653 & 0.937 & 0.963 & 0.956 & 0.938 & 0.937 & 0.927 & 0.910 & 0.886 &0.901&0.090\\
            & GB & 0.995 & 0.991 & 0.982 & 0.976 & 0.947 & 0.802 & 0.638 & 0.453 & 0.312 & 0.788 & 0.246 \\
            & Random & 0.004 & 0.019 & 0.034 & 0.043 & 0.050 & 0.113 & 0.208 & 0.326 & 0.457 &0.139&0.150\\
        \midrule
        
        \multirow{4}{*}{RND}
            & SVD & 0.993 & 0.979 & 0.973 & 0.973 & 0.978 & 0.976 & 0.966 & 0.957 & 0.944 &0.971&0.032\\
            & DCT & 0.998 & 0.981 & 0.952 & 0.922 & 0.907 & 0.889 & 0.875 & 0.867 & 0.860 &0.917&0.047\\
            & GB & 0.991 & 0.971 & 0.953 & 0.942 & 0.939 & 0.932 & 0.926 & 0.916 & 0.907 & 0.942 & 0.025 \\
            & Random & 0.006 & 0.009 & 0.025 & 0.060 & 0.147 & 0.256 & 0.360 & 0.443 & 0.496 & 0.200 & 0.183 \\
        \midrule
        ROSE && 0.809 & 0.807 & 0.861 & 0.893 & 0.907 & 0.920 & 0.934 & 0.951 & 0.965 & 0.894 & 0.054\\
    
        \bottomrule
    \end{tabular}
    }
    \vspace{2mm}
    \caption{
    AUROC of LRatio \citep{ren2019likelihood}, RND \citep{choi2019novelty}, and ROSE(ours) on SVHN with varying brightness. All models are trained on CIFAR-10. Mean stands for the mean of AUROC in a row, and Std indicates the standard deviation of AUROC. The settings of perturbations are the same as in Table \ref{table 1}. See \ref{Appendix B.3 : Implementing Competing Methods}.
    }
    \label{table 8}
\end{table}

\begin{table}[H]
    \setlength{\tabcolsep}{2.5pt}
    \centering
    {\fontsize{9}{10}\selectfont  
    \aboverulesep=0ex 
    \belowrulesep=0ex 
    \begin{tabular}{llccccccccc|cc}
        \toprule
        \multicolumn{2}{c}{Trained on CIFAR-10} & \multicolumn{9}{c}{Brightness of Omniglot} & \multirow{2}{*}{Mean} & \multirow{2}{*}{Std} \\
        \cmidrule(lr){3-11}
        Method & Perturbation& 0.2$\times$ & 0.4$\times$ & 0.6$\times$ & 0.8$\times$ & 1$\times$ & 1.2$\times$ & 1.4$\times$ & 1.6$\times$ & \multicolumn{1}{c}{1.8$\times$} & &\\
        \midrule
        \multirow{4}{*}{LRatio}
            & SVD & 0.046 & 0.463 & 0.192 & 0.005 & 0.001 & 0.003 & 0.007 & 0.010 & 0.011 & 0.082 & 0.146\\
            & DCT & 0.460 & 0.036 & 0.005 & 0.000 & 0.033 & 0.021 & 0.106 & 0.016 & 0.015 & 0.077 & 0.139\\
            & GB & 0.122 & 0.161 & 0.126 & 0.009 & 0.001 & 0.001 & 0.001 & 0.001 & 0.001 & 0.047 & 0.064 \\
            & Random & 0.010 & 0.157 & 0.338 & 0.008 & 0.961 & 0.997 & 0.999 & 0.909 & 0.999 & 0.598 & 0.430\\
        \midrule
        
        \multirow{4}{*}{RND}
            & SVD & 0.221 & 0.347 & 0.499 & 0.781 & 0.957 & 0.961 & 0.964 & 0.966 & 0.968 & 0.740 & 0.285\\
            & DCT & 0.069 & 0.279 & 0.477 & 0.731 & 0.913 & 0.935 & 0.939 & 0.938 & 0.936 & 0.691 & 0.315\\
            & GB & 0.043 & 0.345 & 0.476 & 0.778 & 0.950 & 0.968 & 0.973 & 0.974 & 0.974 & 0.720 & 0.328 \\
            & Random & 0.013 & 0.197 & 0.451 & 0.788 & 0.970 & 0.981 & 0.983 & 0.984 & 0.984 & 0.706 & 0.363 \\
        \midrule
        ROSE && 0.893 & 0.803 & 0.898 & 0.819 & 0.996 & 0.999 & 0.999 & 0.999 & 0.999 & 0.934 & 0.078 \\
    
        \bottomrule
    \end{tabular}
    }
    \vspace{2mm}
    \caption{
    AUROC of LRatio \citep{ren2019likelihood}, RND \citep{choi2019novelty} and ROSE(ours) on Omniglot with varying brightness. All models are trained on CIFAR-10. Mean stands for the mean of AUROC in a row, and Std indicates the standard deviation of AUROC. The settings of perturbations are the same as in Table \ref{table 1} except that $K$ is 22 for SVD in this Table. See \ref{Appendix B.3 : Implementing Competing Methods}.
    }
    \label{table 9}
\end{table}

\begin{table}[H]
    \centering
    {\fontsize{9}{10}\selectfont  
    \begin{tabular}{ccccccccc}
    \toprule 
        CIFAR-10 & SVD22 & SVD28 & DCT28 & GB & Random0.1 & Random0.2 & Random0.3  \\
    \midrule
        SVHN     & 0.564 & 0.913 & 0.938 & 0.947 & 0.040 & 0.050 & 0.071\\
        CelebA   & 0.343 & 0.324 & 0.373 & 0.458 & 0.330 & 0.333 & 0.340\\
        LSUN     & 0.514 & 0.581 & 0.573 & 0.485 & 0.597 & 0.576 & 0.580\\
        MNIST    & 0.984 & 0.731 & 0.157 & 0.117 & 0.317 & 0.825 & 0.788\\
        FMNIST   & 0.832 & 0.655 & 0.348 & 0.230 & 0.514 & 0.783 & 0.716\\
        NotMNIST & 0.339 & 0.163 & 0.154 & 0.092 & 0.253 & 0.506 & 0.533\\
        Noise    & 0.225 & 0.906 & 0.981 & 0.839 & 1.000 & 1.000 & 1.000\\
        Constant & 0.328 & 0.418 & 0.928 & 0.713 & 0.238 & 0.223 & 0.235\\
        Overall  & 0.514 & 0.536 & 0.456 & 0.407 & 0.379 & 0.610 & 0.615\\ 
    \bottomrule
    \end{tabular}
    }
    \vspace{2mm} 
    \caption{
    AUROC of LRatio \citep{ren2019likelihood} on CIFAR-10 VAE according to various perturbation methods.
    Background models are trained with $\lambda = 0$.
    }
    \label{table 10}
\end{table}

\begin{table}[H]
  \centering
  {\fontsize{9}{10}\selectfont  
  \begin{tabular}{ccccccccc}
    \toprule 
    CIFAR-10 & SVD22 & SVD28 & DCT28 & GB & Random0.1 & Random0.2 & Random0.3 \\
    \midrule
    SVHN    & 0.984 & 0.978 & 0.907 & 0.939 & 0.143 & 0.147 & 0.146\\
    CelebA  & 0.582 & 0.566 & 0.548 & 0.554 & 0.573 & 0.577 & 0.573\\
    LSUN & 0.562 & 0.563 & 0.553 & 0.564 & 0.570 & 0.573 & 0.570\\
    MNIST & 0.933 & 0.937 & 0.897 & 0.961 & 0.928 & 0.933 &  0.925\\
    FMNIST & 0.927 & 0.824 & 0.794 & 0.841 & 0.784 & 0.788 & 0.789  \\
    NotMNIST & 0.978 & 0.979 & 0.965 & 0.985 & 0.960 & 0.964 & 0.961 \\
    Noise    & 0.999 & 0.999 & 1.000 & 1.000 & 1.000 & 1.000 & 1.000 \\
    Constant & 0.999 & 0.999 & 0.999 & 1.000 & 0.825 & 0.828 & 0.822 \\
    Overall  & 0.884 & 0.866 & 0.846 & 0.871 & 0.773 & 0.777 & 0.774 \\ 
    \bottomrule
  \end{tabular}
  }
  \vspace{2mm} 
  \caption{AUROC of RND \citep{choi2019novelty} models on CIFAR-10 according to various perturbation methods.}
  \label{table 11}
\end{table}

\newpage
\section{Parameter Selection and Transfer Learning} \label{Appendix D : Parameter Selection}
\begin{figure}[H]
\begin{subfigure}[]{0.330\textwidth}
  \centering
  \includegraphics[width=1\textwidth]{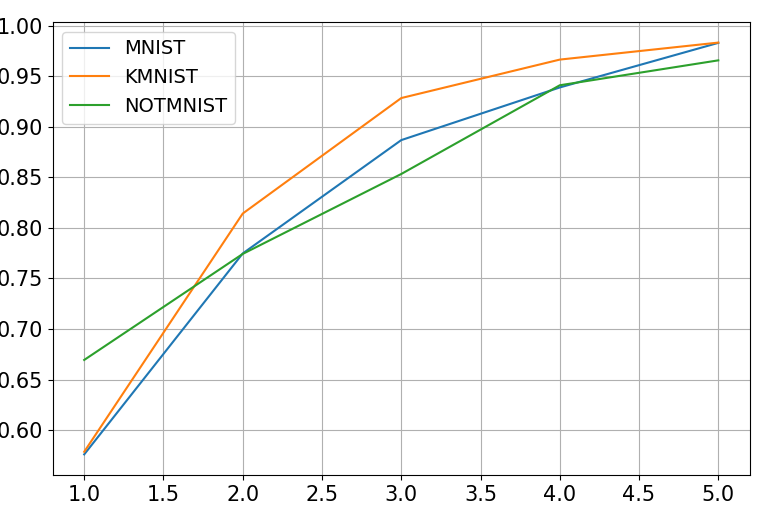}
  \caption{}
\end{subfigure}
\begin{subfigure}[]{0.330\textwidth}
  \centering
  \includegraphics[width=1\textwidth]{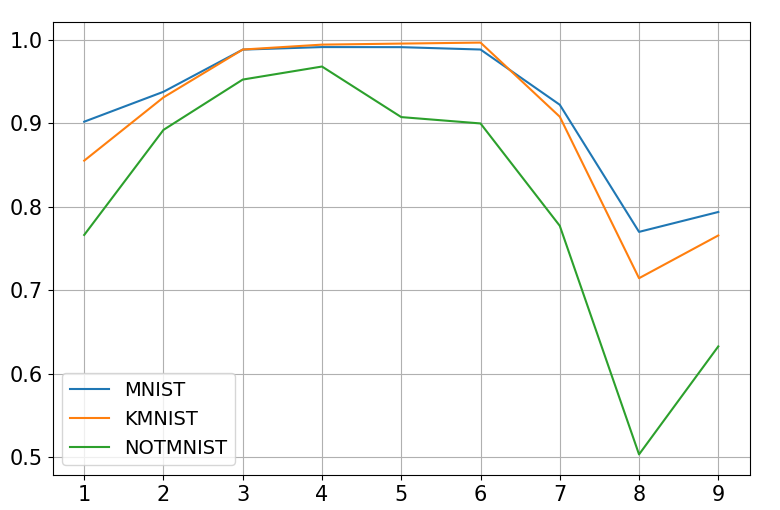}
  \caption{}
\end{subfigure}
\begin{subfigure}[]{0.330\textwidth}
  \centering
  \includegraphics[width=1\textwidth]{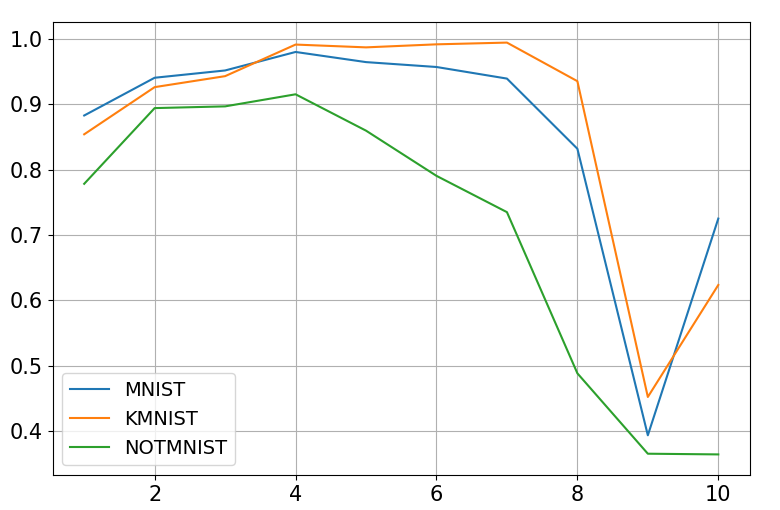}
  \caption{}
\end{subfigure}
\begin{subfigure}[]{0.330\textwidth}
  \centering
  \includegraphics[width=1\textwidth]{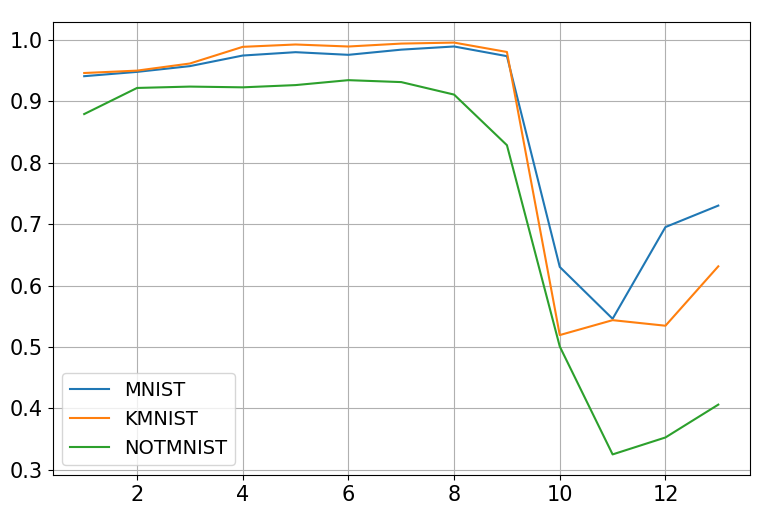}
  \caption{}
\end{subfigure}
\begin{subfigure}[]{0.330\textwidth}
  \centering
  \includegraphics[width=1\textwidth]{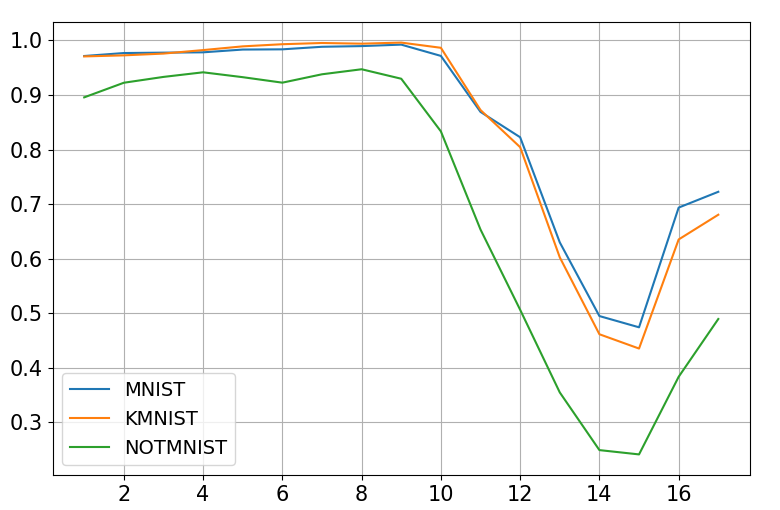}
  \caption{}
\end{subfigure}
\begin{subfigure}[]{0.330\textwidth}
  \centering
  \includegraphics[width=1\textwidth]{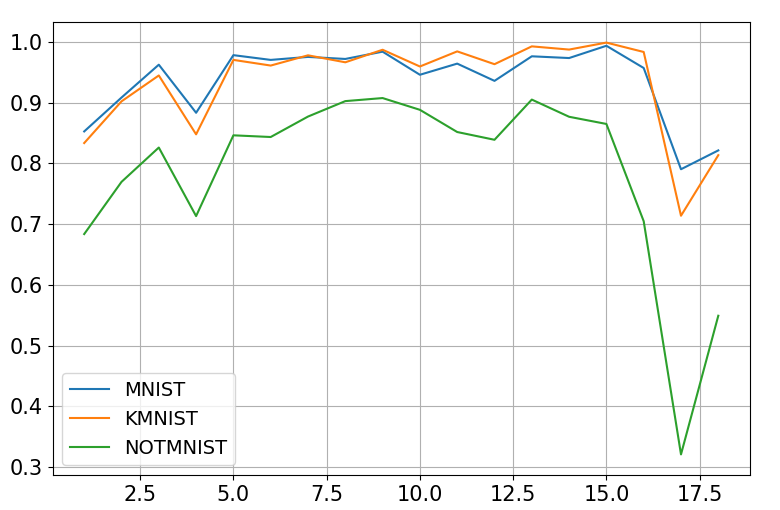}
  \caption{}
\end{subfigure}
\caption{
Layer-wise AUROC on various classifiers trained on Fashion-MNIST;
(a) CNN architecture with four convolutional layers,
(b) VGG-11,
(c) VGG-13,
(d) VGG-16,
(e) VGG-19,
(f) ResNet-18.
All network has one fully-connected layer.
The horizontal axis indicates the depth of each layer and the vertical axis indicates the AUROC of each layer.
}
\label{figure 6}
\end{figure}

As reported in \citep{havtorn2021hierarchical}, the data representations extracted from the early layers of two deep hierarchical VAEs, each trained on different datasets, are highly correlated.
This observation implies that early layers in VAE extract low-level features which generalize well across the data.
Such low-level features dominate the likelihood of a sample, which leads to failure in OOD detection \citep{havtorn2021hierarchical}.
Similarly, the likelihood in the latent space of a normalizing flow is assigned properly (i.e., the low likelihood for OOD and the high likelihood for in-distribution); however, an opposite phenomenon is observed for the input likelihood \citep{nalisnick2018deep}.
This is because the volume element of the normalizing flow, which is induced from shallow layers, dominates the likelihood.
Moreover, several studies observed that as the layers go deeper, more likely the features to obtain semantic properties \citep{gatys2016image}.
To summarize, various studies show that the features of early layers represent local properties and exert influence on the likelihood estimation, hindering OOD detection.

Our score can be seen as an ensemble of layer-wise scores.
From this perspective, we carefully predict that our score may be a new measurement of the extent to which semantic properties are extracted by each layer.
If the layer tends to extract only local properties, that layer may fail to detect an OOD sample of which texture is similar to in-distribution data samples.
To test such potential, we train classifiers such as a basic convolutional neural network (basic CNN) derived from AlexNet \citep{NIPS2012_c399862d}, VGGNet \citep{vgg}, ResNet \citep{He_2016_CVPR}, and observe how well each layer of these neural networks can detect anomalies.

Six classifiers on Fashion-MNIST are trained; Basic CNN, VGG-11, VGG-13, VGG-16, VGG-19, and ResNet-18.
Basic CNN consists of four convolutional layers.
For all networks, we use one fully connected layer.
For test samples, the arguments of the maxima (argmax) of classification probability are used as a label.
Furthermore, the cross entropy loss is applied both for training and evaluation.
AUROC is used for the detection metric. 
MNIST, KMNIST, and NotMNIST are used as the OOD datasets since these data have the most similar local properties to Fashion-MNIST among our test datasets.
Moreover, the diagonal preconditioner is employed.

We predict that in shallow networks, such as basic CNN, semantic features will be extracted from relatively deep layers.
Hence, AUROCs are expected to be higher as the layer deepens.
On the other hand, in the deep networks, we conjecture that the semantic part may be slowly pulled out at the middle layer.
In particular, Fashion-MNIST is a simple dataset so that even VGG-11 can be an over-weighted model.

Figure \ref{figure 6} shows our results on six classifiers.
As we expected, AUROCs increase in basic CNN as the layer goes deeper. 
Moreover, AUROCs of VGG series and ResNet-18 increase gradually and drop off at the deep layers.
Through these observations, we expect that our score (on each layer) can be a key to identify the layer's properties.
Further experiments and discussions remain as future work.

\section{Proofs} \label{Appendix E : proofs}
\begin{prop}
    Let $\theta_0$ be the marginal maximum likelihood on uninformative uniform prior.\footnote{Note that our theorem can be extended to Gaussian or Laplacian prior to consider weight decay.}
    Then, for certain regularization condition, score $S_X(\Tilde{x})$ is approximated as
    \begin{equation}
        \mathcal{S}_X(\Tilde{x})
        = \nabla_\theta \log{p(\Tilde{x}|\theta_0)}^T \Delta\theta
        +\frac{1}{2}\Delta\theta^T (\nabla_\theta^2 \log{p(\Tilde{x}|\theta_0)}-N \mathcal{F}_X
        (\theta_0))\Delta\theta +O(\lVert \Delta\theta\rVert^3),
    \end{equation}
    where $\Delta\theta = \theta_1-\theta_0$, $N=|X|$, and $\mathcal{F}_X(\theta_0)=\mathbb{E}_{x\in X}[\nabla_\theta \log{p(x|\theta_0)} \nabla_\theta \log{p(x|\theta_0)}^T]$.
\end{prop}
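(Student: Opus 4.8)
The plan is to read $\mathcal{S}_X(\tilde x)$ as the increment of a log-posterior objective under the small parameter perturbation $\Delta\theta = \theta_1 - \theta_0$, and to expand it to second order. First I would unfold the posterior. Under the uninformative uniform prior and the standard i.i.d.\ assumption, Bayes' rule gives $\log p(\theta \mid X, \tilde x) = \sum_{i=1}^{N} \log p(x_i \mid \theta) + \log p(\tilde x \mid \theta) + c$ for a normalizing constant $c$ independent of $\theta$. Writing $L_0(\theta) = \sum_{i=1}^{N} \log p(x_i \mid \theta)$ for the training log-likelihood and $L(\theta) = L_0(\theta) + \log p(\tilde x \mid \theta)$, the constant cancels in the difference, so that $\mathcal{S}_X(\tilde x) = L(\theta_1) - L(\theta_0)$. (The footnote's extension to Gaussian or Laplacian priors merely adds a $\log p(\theta)$ term to both $L_0$ and $L$ and redefines $\theta_0$ as the MAP estimate; the argument is otherwise unchanged.)

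Next I would Taylor expand $L$ about $\theta_0$:
\[ \mathcal{S}_X(\tilde x) = \nabla_\theta L(\theta_0)^T \Delta\theta + \frac{1}{2} \Delta\theta^T \nabla_\theta^2 L(\theta_0) \Delta\theta + O(\lVert \Delta\theta \rVert^3). \]
Because $\theta_0$ maximizes $L_0$, it is a stationary point, $\nabla_\theta L_0(\theta_0) = 0$, and hence $\nabla_\theta L(\theta_0) = \nabla_\theta \log p(\tilde x \mid \theta_0)$. This reproduces the claimed first-order term exactly. For the quadratic term, linearity of the Hessian gives $\nabla_\theta^2 L(\theta_0) = \nabla_\theta^2 \log p(\tilde x \mid \theta_0) + \sum_{i=1}^{N} \nabla_\theta^2 \log p(x_i \mid \theta_0)$.

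Finally I would invoke the \emph{information-matrix equality} at the fitted parameter. Under the regularity conditions quoted in the statement, $\frac{1}{N} \sum_{i=1}^{N} \nabla_\theta^2 \log p(x_i \mid \theta_0) \approx -\mathbb{E}_{x \in X}\big[ \nabla_\theta \log p(x \mid \theta_0) \nabla_\theta \log p(x \mid \theta_0)^T \big] = -\mathcal{F}_X(\theta_0)$, so that $\sum_{i=1}^{N} \nabla_\theta^2 \log p(x_i \mid \theta_0) \approx -N \mathcal{F}_X(\theta_0)$. Substituting this into the Hessian yields exactly $\nabla_\theta^2 \log p(\tilde x \mid \theta_0) - N \mathcal{F}_X(\theta_0)$, which completes the identification with the stated formula.

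The main obstacle is this last substitution. The identity $\mathbb{E}[\nabla^2 \log p] = -\mathbb{E}[\nabla \log p \, \nabla \log p^T]$ holds \emph{exactly} only when the expectation is taken under the model $p(\cdot \mid \theta_0)$ itself; replacing that model expectation by the empirical average over the finite training set $X$—and thereby replacing the true negative Hessian by the empirical-Fisher outer-product form—is precisely where the ``certain regularization condition'' enters, and it is the only non-exact move in the argument. Everything else (the i.i.d.\ factorization, the Taylor expansion, and the vanishing of the first-order training term at the stationary point $\theta_0$) is exact, so the quality of the approximation is governed entirely by how close $\theta_0$ is to a well-specified maximum-likelihood fit of $X$.
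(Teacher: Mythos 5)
Your proposal is correct and follows essentially the same route as the paper's own proof: expand the log-posterior to second order around $\theta_0$, use that $\theta_0$ maximizes the training objective so the first-order training term vanishes, and replace the training Hessian $\nabla_\theta^2 \log p(\theta_0|X)$ by $-N\mathcal{F}_X(\theta_0)$. Your explicit flagging of the Hessian-to-empirical-Fisher substitution as the only inexact step is precisely what the paper compresses into the phrases ``direct application of the Laplace approximation'' and ``certain regularization condition,'' so the two arguments coincide.
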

\begin{proof}
Let $\Delta \theta = \theta_1 - \theta_0$. By applying the Bayes' theorem, we obtain
\begin{align}
    \mathcal{S}_X(\Tilde{x}) 
    &= \log{p(\theta_1|X, \Tilde{x})} - \log{p(\theta_0|X, \Tilde{x})}\\
    &=  \log{p(\Tilde{x}|X, \theta_1)}-\log{p(\Tilde{x}|X, \theta_0)}+\log{p(\theta_1|X)}-\log{p(\theta_0|X)}.
\end{align}
Then, by the second-order approximation,
\begin{align}
\mathcal{S}_X(\Tilde{x})
    &= \nabla_\theta \log{p(\Tilde{x}|\theta_0)}^T \Delta \theta + \frac{1}{2}\Delta \theta^T \big(\nabla_\theta^2 \log{p(\Tilde{x}|\theta_0)} + \nabla_\theta^2 \log{p(\theta_0|X)}\big) \Delta \theta + O(\lVert \Delta \theta \rVert^3)\\
    &= \nabla_\theta \log{p(\Tilde{x}|\theta_0)}^T \Delta \theta + \frac{1}{2}\Delta \theta^T \nabla_\theta^2 \log{p(\Tilde{x}|\theta_0)}\Delta \theta - \frac{N}{2}\Delta\theta^T \mathcal{F}_X(\theta_0) \Delta\theta+ O(\lVert \Delta \theta \rVert^3)
\end{align}

\end{proof}

\newpage
\section{Images} \label{Appendix F : Images}
\subsection{Dataset Explanation}

\begin{figure}[H]
\begin{subfigure}[]{0.5\textwidth}
  \centering
  \includegraphics[width=1\textwidth]{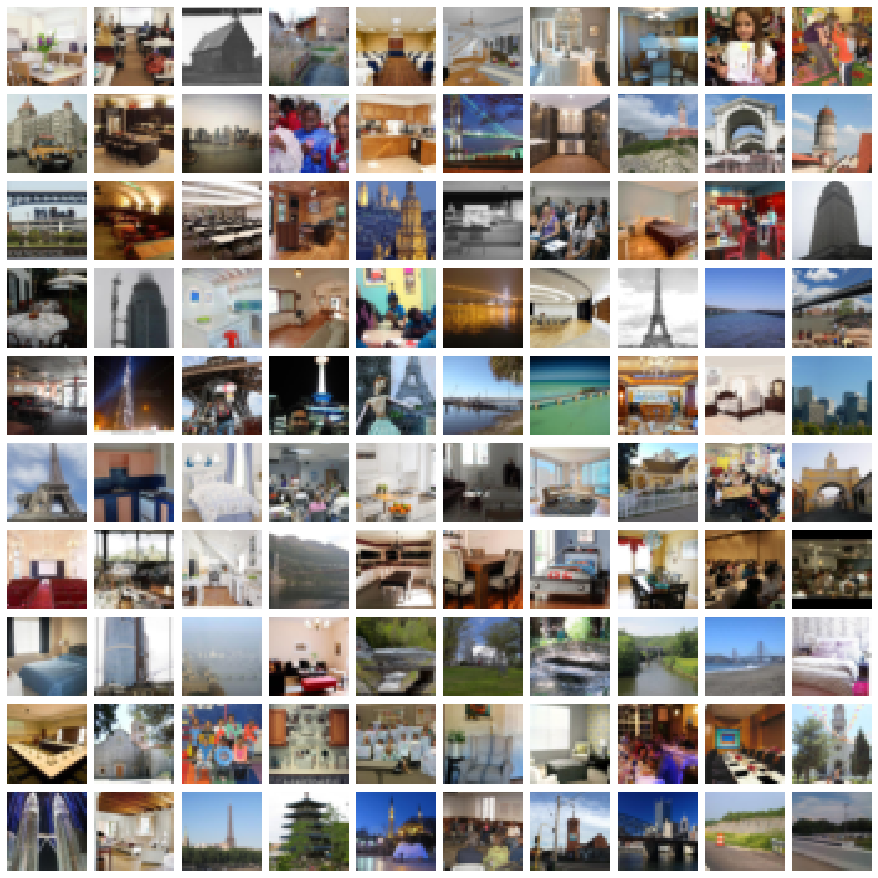}
  \caption{}
\end{subfigure}
\begin{subfigure}[]{0.5\textwidth}
  \centering
  \includegraphics[width=1\textwidth]{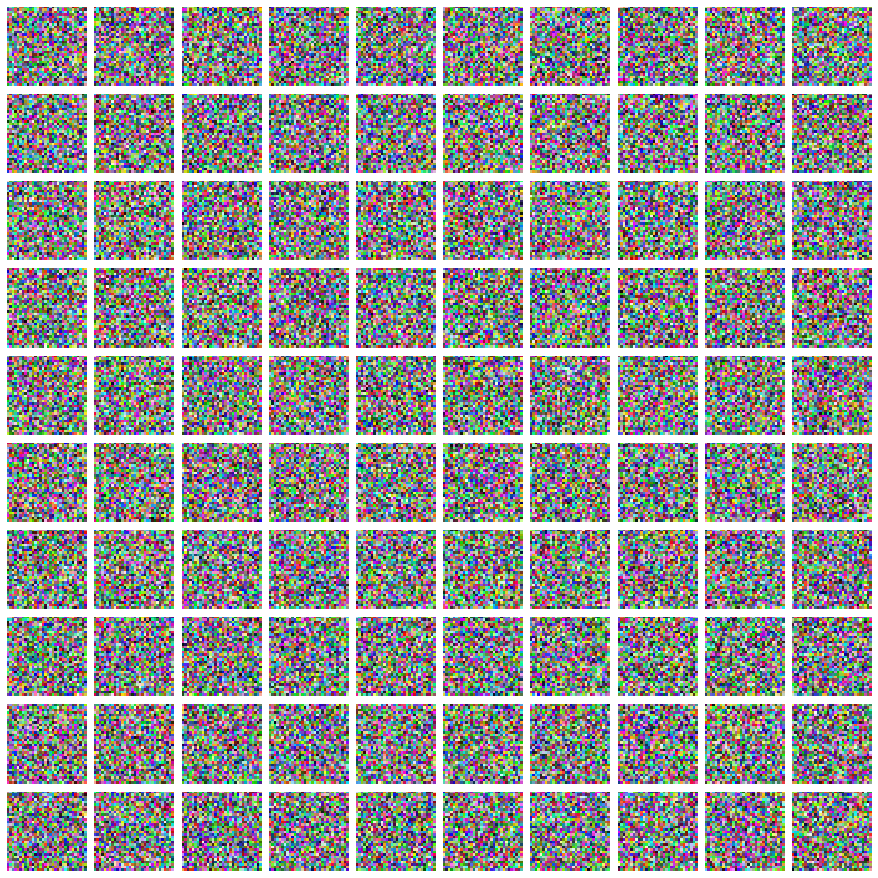}
  \caption{}
\end{subfigure}

\caption{
(a) shows some examples of LSUN, which are resized and cropped by 32 $\times$ 32.
(b) shows generated noise samples of which each pixel is randomly sampled from $\{0, 1, \cdots, 255\}$.
}
\label{figure dataset}
\end{figure}

\subsection{VAE Reconstruction Examples}

\begin{figure}[H]
\begin{subfigure}[]{0.5\textwidth}
  \centering
  \includegraphics[width=1\textwidth]{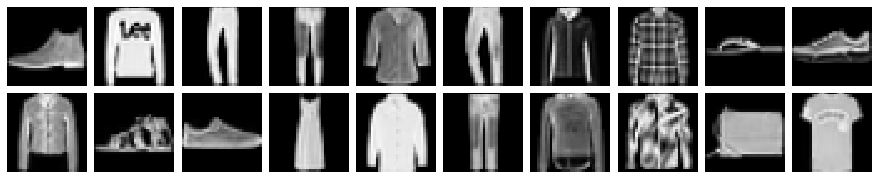}
\end{subfigure}
\begin{subfigure}[]{0.5\textwidth}
  \centering
  \includegraphics[width=1\textwidth]{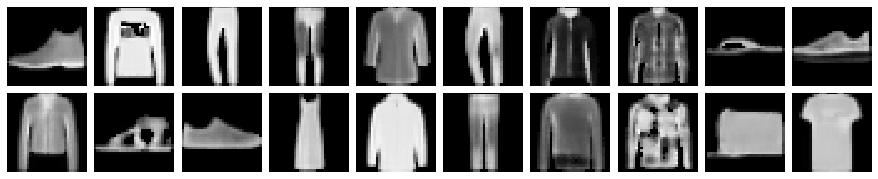}
\end{subfigure}

\begin{subfigure}[]{0.5\textwidth}
  \centering
  \includegraphics[width=1\textwidth]{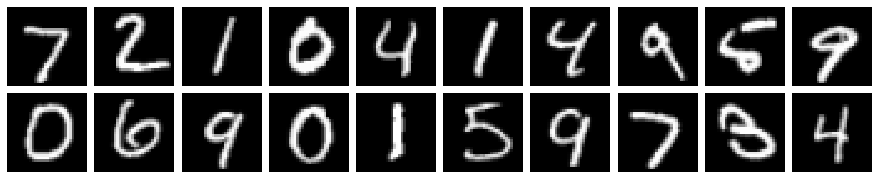}
  \caption{}
\end{subfigure}
\begin{subfigure}[]{0.5\textwidth}
  \centering
  \includegraphics[width=1\textwidth]{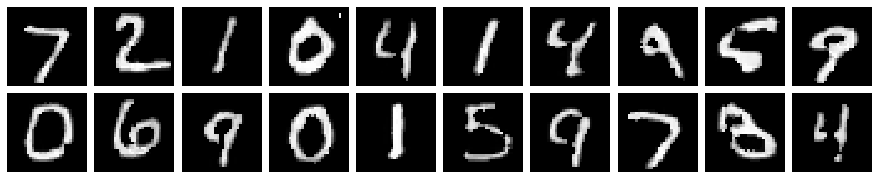}
  \caption{}
\end{subfigure}

\begin{subfigure}[]{0.5\textwidth}
  \centering
  \includegraphics[width=1\textwidth]{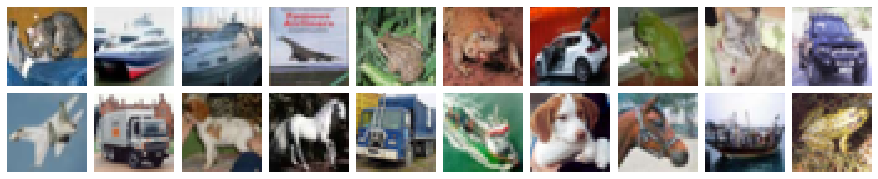}
\end{subfigure}
\begin{subfigure}[]{0.5\textwidth}
  \centering
  \includegraphics[width=1\textwidth]{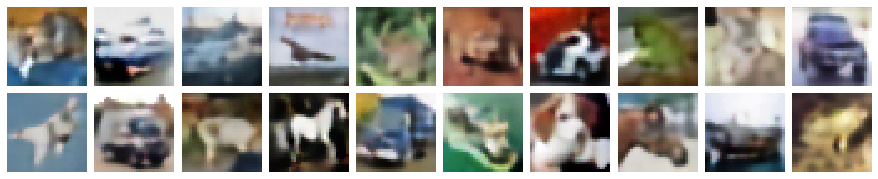}
\end{subfigure}

\begin{subfigure}[]{0.5\textwidth}
  \centering
  \includegraphics[width=1\textwidth]{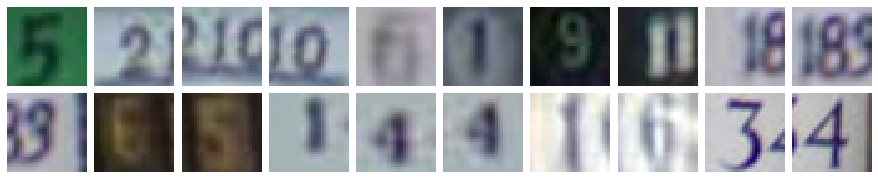}
  \caption{}
\end{subfigure}
\begin{subfigure}[]{0.5\textwidth}
  \centering
  \includegraphics[width=1\textwidth]{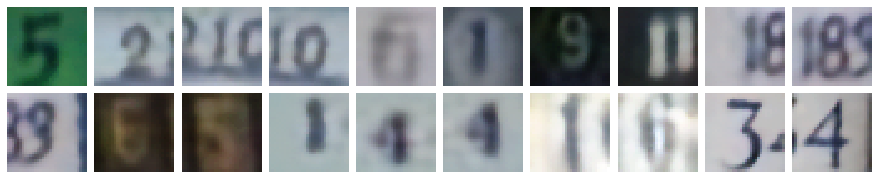}
  \caption{}
\end{subfigure}

\caption{
(a) shows some examples of Fashion-MNIST (upper) and MNIST (lower).
(b) shows reconstruction of (a) by VAE trained on Fashion-MNIST.
(c) shows some examples of CIFAR-10 (upper) and SVHN (lower).
(d) shows reconstruction of (c) by VAE trained on CIFAR-10.
Though MNIST samples in (a) and SVHN samples in (c) are definite outliers, they are reconstructed in high quality.
}
\end{figure}

\subsection{GLOW Reconstruction Examples}
\begin{figure}[H]
\begin{subfigure}[]{0.5\textwidth}
  \centering
  \includegraphics[width=1\textwidth]{images/appendixF/fmnist.png}
\end{subfigure}
\begin{subfigure}[]{0.5\textwidth}
  \centering
  \includegraphics[width=1\textwidth]{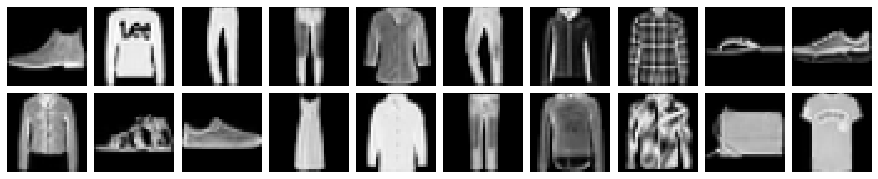}
\end{subfigure}

\begin{subfigure}[]{0.5\textwidth}
  \centering
  \includegraphics[width=1\textwidth]{images/appendixF/mnist.png}
  \caption{}
\end{subfigure}
\begin{subfigure}[]{0.5\textwidth}
  \centering
  \includegraphics[width=1\textwidth]{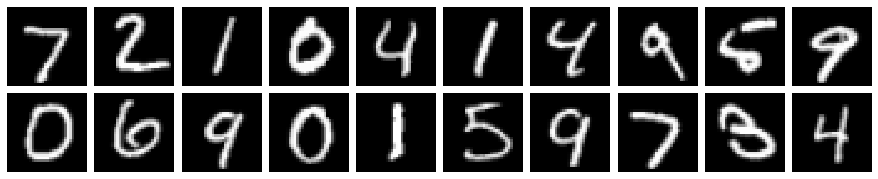}
  \caption{}
\end{subfigure}

\begin{subfigure}[]{0.5\textwidth}
  \centering
  \includegraphics[width=1\textwidth]{images/appendixF/cifar10.png}
\end{subfigure}
\begin{subfigure}[]{0.5\textwidth}
  \centering
  \includegraphics[width=1\textwidth]{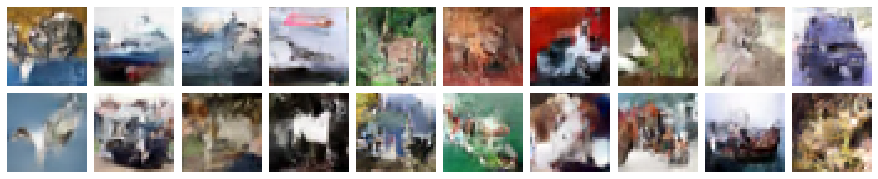}
\end{subfigure}

\begin{subfigure}[]{0.5\textwidth}
  \centering
  \includegraphics[width=1\textwidth]{images/appendixF/svhn.png}
  \caption{}
\end{subfigure}
\begin{subfigure}[]{0.5\textwidth}
  \centering
  \includegraphics[width=1\textwidth]{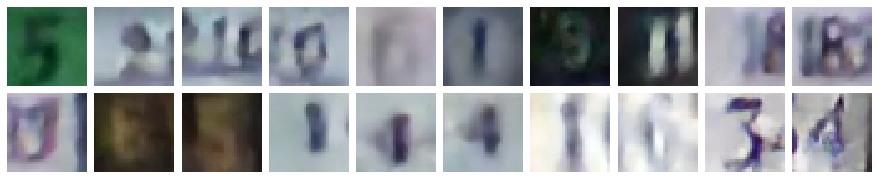}
  \caption{}
\end{subfigure}

\caption{
(a) shows some examples of Fashion-MNIST (upper) and MNIST (lower).
(b) shows reconstruction of (a) by GLOW trained on Fashion-MNIST.
(c) shows some examples of CIFAR-10 (upper) and SVHN (lower).
(d) shows reconstruction of (c) by GLOW trained on CIFAR-10.
Though MNIST samples in (a) and SVHN samples in (c) are definite outliers, they are reconstructed in high quality.
}
\end{figure}

\subsection{Brightness Examples}
\begin{figure}[H]
\centering
\includegraphics[width=1\textwidth]{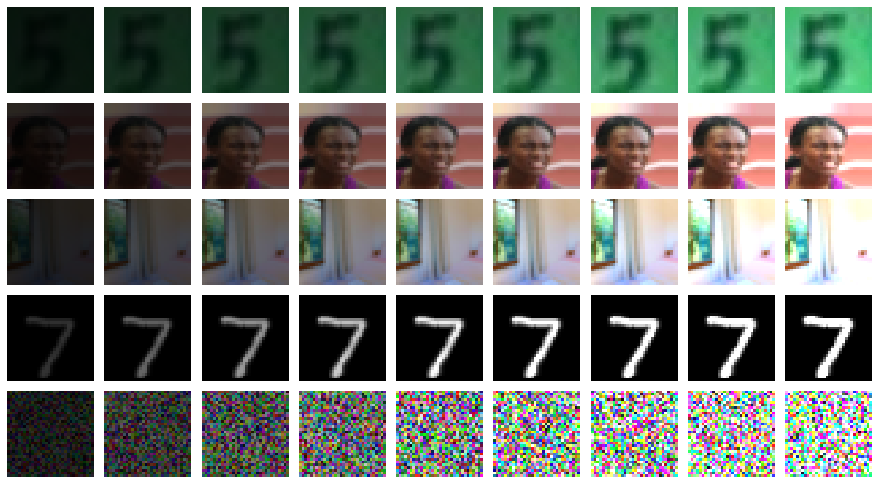}

\caption{ 
Examples with changing brightness.
From left to right, brightness changes from 0.2x to 1.8x at 0.2 intervals of the original image.
From top to down, the data sample is drawn from SVHN, CelebA, LSUN, MNIST, and noise.
}
\end{figure}

\subsection{Blurring Examples}

\begin{figure}[H]
\centering
\begin{subfigure}[]{0.100\textwidth}
  \centering
  \includegraphics[width=1\textwidth]{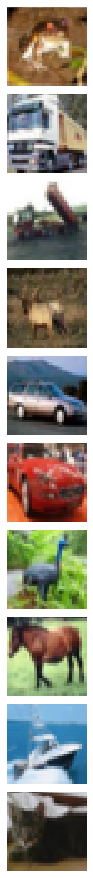}
  \caption{}
\end{subfigure}
\begin{subfigure}[]{0.100\textwidth}
  \centering
  \includegraphics[width=1\textwidth]{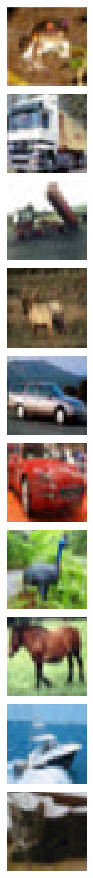}
  \caption{}
\end{subfigure}
\begin{subfigure}[]{0.100\textwidth}
  \centering
  \includegraphics[width=1\textwidth]{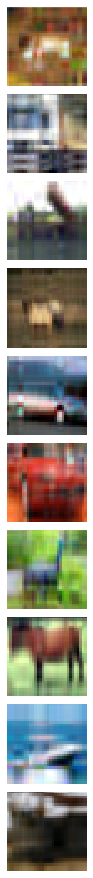}
  \caption{}
\end{subfigure}
\begin{subfigure}[]{0.100\textwidth}
  \centering
  \includegraphics[width=1\textwidth]{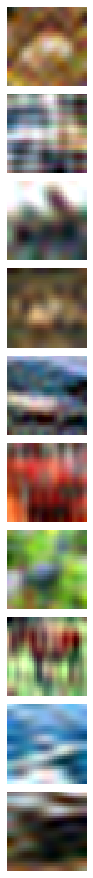}
  \caption{}
\end{subfigure}
\begin{subfigure}[]{0.100\textwidth}
  \centering
  \includegraphics[width=1\textwidth]{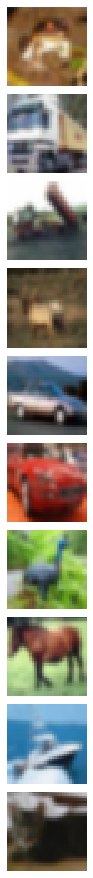}
  \caption{}
\end{subfigure}
\begin{subfigure}[]{0.100\textwidth}
  \centering
  \includegraphics[width=1\textwidth]{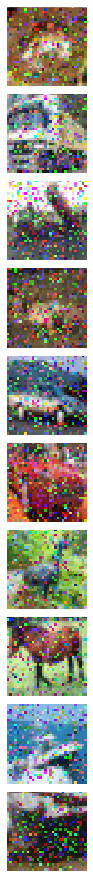}
  \caption{}
\end{subfigure}
\begin{subfigure}[]{0.100\textwidth}
  \centering
  \includegraphics[width=1\textwidth]{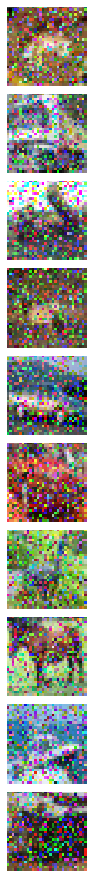}
  \caption{}
\end{subfigure}
\caption{ 
Visualization of various perturbation methods.
(b)-(g) are perturbed images of (a) original CIFAR-10 training samples.
(b) is the SVD blurring by discarding 22 singular values and (c) discards 28 singular values.
(d) shows DCT blurred images and (e) shows samples after Gaussian blurring.
(f) and (g) are examples of uniformly at random perturbation with ratio 0.2 and 0.3 respectively.
}
\end{figure}

\end{document}